\newtheorem{theorem}{Theorem}
\newtheorem{lemma}[theorem]{Lemma}
\newcommand{\qw}[1][-1]{\ar @{-} [0,#1]}
\newcommand{\gate}[1]{*+<.6em>{#1} \POS ="i","i"+UR;"i"+UL **\dir{-};"i"+DL **\dir{-};"i"+DR **\dir{-};"i"+UR **\dir{-},"i" \qw}
\newcommand{\measureD}[1]{*{\xy*+=<0em,.1em>{#1}="e";"e"+UR+<0em,.25em>;"e"+UL+<-.5em,.25em> **\dir{-};"e"+DL+<-.5em,-.25em> **\dir{-};"e"+DR+<0em,-.25em> **\dir{-};{"e"+UR+<0em,.25em>\ellipse^{}};"e"+C:,+(0,1)*{} \endxy} \qw}
\newcommand{\multigate}[2]{*+<1em,.9em>{\hphantom{#2}} \POS [0,0]="i",[0,0].[#1,0]="e",!C *{#2},"e"+UR;"e"+UL **\dir{-};"e"+DL **\dir{-};"e"+DR **\dir{-};"e"+UR **\dir{-},"i" \qw}
\newcommand{\ghost}[1]{*+<1em,.9em>{\hphantom{#1}} \qw}
\newcommand{\lstick}[1]{*!R!<.5em,0em>=<0em>{#1}}
\newcommand{\Qcircuit}{\xymatrix @*=<0em>}
\algnewcommand\INPUT{\item[\textbf{Input:}]}%
\algnewcommand\OUTPUT{\item[\textbf{Output:}]}%
\begin{document}

\title{Learning Hidden Quantum Markov Models}

\author{Siddarth Srinivasan}
\affil{College of Computing, Georgia Institute of Technology, Atlanta, GA 30332, USA}
\author{Geoff Gordon}
\affil{School of Computer Science, Carnegie Mellon University, Pittsburgh, PA 15213, USA}
\author{Byron Boots}
\affil{College of Computing, Georgia Institute of Technology, Atlanta, GA 30332, USA}

\date{\today}

\maketitle

\begin{abstract}
Hidden Quantum Markov Models (HQMMs) can be thought of as quantum probabilistic graphical models that can model sequential data. We extend previous work on HQMMs with three contributions: (1) we show how classical hidden Markov models (HMMs) can be simulated on a quantum circuit, (2) we reformulate HQMMs by relaxing the constraints for modeling HMMs on quantum circuits, and (3) we present a learning algorithm to estimate the parameters of an HQMM from data. While our algorithm requires further optimization to handle larger datasets, we are able to evaluate our algorithm using several synthetic datasets. We show that on HQMM generated data, our algorithm learns HQMMs with the same number of hidden states and predictive accuracy as the true HQMMs, while HMMs learned with the Baum-Welch algorithm require more states to match the predictive accuracy.
\end{abstract}

\section{Introduction}
%There has been a burgeoning interest in the intersection of quantum information and machine learning in recent years, with much interest in investigating how quantum information theory can provide speedups to classical machine learning algorithms. There has also been interest in translating classical machine learning algorithms into the language of quantum mechanics. 

%In this paper, 
We extend previous work on Hidden Quantum Markov Models (HQMMs), and propose a novel approach to learning these models from data.
HQMMs can be thought of as a new, expressive class of graphical models that have adopted the mathematical formalism for reasoning about uncertainty from quantum mechanics. We stress that while HQMMs could naturally be implemented on quantum computers, we do not need such a machine for these models to be of value. Instead, 
HQMMs can be viewed as novel models inspired by quantum mechanics that can be run on classical computers. 
In considering these models, we are interested in answering three questions: (1) how can we construct quantum circuits to simulate classical Hidden Markov Models (HMMs); (2) what happens if we take full advantage of this quantum circuit instead of enforcing the classical probabilistic constraints; and (3) how do we learn the parameters for quantum models from data? \\ %We find that HQMMs can be used to learn and simulate HMMs and offer richer dynamics, all of which have been identified as interesting, non-trivial problems~\cite{schuld2015introduction}. %\boots{should indicate that (3) is an open question that people think is important. provide citations to back this up.} \sid{I've added the citation right before this comment: it's the same people I quoted earlier. I also reference them in related works, is it ok to cite them twice for the same thing? } 

%\sid{So Monras addresses/answers 1\&2 in some depth (but not as much as us), and Zhao/Jaeger give us 3. Just want to make sure we're clear on what *our* contributions are...}

%\sid{To assist that framing, here is a quote from Schuld 2014, and I think we pretty much hit all these points: The advantage of hidden quantum Markov models
%is that they contain classical hidden Markov models and are therefore a generalisation offering richer
%dynamics than the original process [61]. In future
%there might also be the possibility of ‘calculating’ the
%outcomes of classical models via quantum simulation.
%That would be especially interesting if the quantum
%setting could learn models from given examples, a
%problem which is nontrivial [62]. Clark et al. [18]
%add the notion that hidden quantum Markov models
%can be implemented using open quantum systems
%with instantaneous feedback, in which information
%obtained from the environment is used to influence
%the system. However, a rigorous treatment of this
%idea is still outstanding, and the power of hidden
%quantum Markov models to solve the problems for
%which classical models where developed is yet to be
%shown.}
% \boots{I think the quote (which is now commented out) is adequately addressed in the intro. The only thing we may want to say somewhere is that HMMs can be simulated by HQMMs and that HQMMs offer richer dyanamics}

The paper is structured as follows: first we describe related work and provide background on quantum information theory as it relates to our work. Next, we describe the hidden quantum Markov model and compare our approach to previous work in detail, and give a scheme for writing \emph{any} hidden Markov model as an HQMM. Finally, our main contribution is the introduction of a maximum-likelihood-based unsupervised learning algorithm that can estimate the parameters of an HQMM from data. Our implementation is slow to train HQMMs on large datasets, and will require further optimization. Instead, we evaluate our learning algorithm for HQMMs on several simple synthetic datasets by learning a quantum model from data and filtering and predicting with the learned model. We also compare our model and learning algorithm to maximum likelihood for learning hidden Markov models and show that the more expressive HQMM can match HMMs' predictive capability with fewer hidden states on data generated by HQMMs. %\boots{need to update to reflect the current experimental results}

\section{Background} % RELATED WORK}

%We briefly describe how probabilities in quantum mechanics differ from classical probability theory as it relates to the material presented in this paper.

\subsection{Related Work}

%Here we review some of the relevant literature at the intersection of machine learning and quantum information, and describe how our work fits in the context of previous work.

% \boots{We may want to consider combining this section with the introduction, or, at least, placing it immediately after the introduction}
%\sid{I don't mind moving it up ahead -- only concern is that it mentions Kraus operators and other theory that reader might not know, if that's not a problem we can leave it here.}

\indent Hidden Quantum Markov Models  were introduced by \citet{monras2010hidden}, who discussed their relationship to classical HMMs, and parameterized these HQMMs using a set of Kraus operators.  \citet{clark2015hidden} further investigated HQMMs, and showed that they could be viewed as open quantum systems with instantaneous feedback. We %build on this work by showing how we can 
arrive at the same Kraus operator representation by building a quantum circuit to simulate a classical HMM and then relaxing some constraints. \\

Our work can be viewed as extending previous work by \citet{zhao2010norm} on Norm-observable operator models (NOOM) and \citet{jaeger2000observable} on observable-operator models (OOM). We show that HQMMs can be viewed as complex-valued extensions of NOOMs, formulated in the language of quantum mechanics. We use this connection to adapt the learning algorithm for NOOMs in \citet{noomreport} into the first known learning algorithm for HQMMs, and demonstrate that the theoretical advantages of HQMMs also hold in practice. \\%We also find that our scheme to construct an HQMM to simulate HMMs can be applied to an OOM, allowing us to find an equivalent HQMM for OOMs, NOOMs and HMMs.% \sid{This should only apply to valid OOMs, right? One of the problems with OOM is NPP, so if you had an invalid OOM you couldnt write it as an HQMM. Do we need to add this? Related question: when you learn an OOM using method of moments, do you always only get a transformed matrix, which is what makes it undecidable whether it's valid? OR do you get a perfectly respectable-looking OOM that is actually sinisterly hiding the fact that it's not valid? I ask because this could be relevant to the question of can we learn OOMs that may not be valid and then run/simulate them on HQMMs...}

\citet{schuld2015introduction} and \citet{biamonte2016quantum} provide general overviews of quantum machine learning, and describe relevant work on HQMMs. They suggest that developing algorithms that can learn HQMMs from data is an important open problem. % are a generalization of HMMs that `offer richer dynamics' and state that learning quantum models from data is an interesting and non-trivial problem. \
We provide just such a learning algorithm %to learn the parameters for a HQMM from data 
in Section \ref{sec:la}. \\% \sid{Here is where I again reference that quote.}

Other work at the intersection of machine learning and quantum mechanics includes~\citet{NIPS2016_6401} on quantum perceptron models and learning algorithms. %, where they provide a quantum learning algorithm. 
\citet{schuld2015simulating} discuss simulating a perceptron on a quantum computer. 

%\subsection{Probability Amplitudes and Quantum Mechanics}

%Classically, a discrete random variable $X$ has outcomes with associated probabilities $p_i$ that are real, non-negative numbers between 0 and 1. A discrete probability distribution can be expressed as a vector $\vec{x}$ with entries that describe the probability of being in the corresponding system state. Each entry is real, non-negative, and sums to 1.

%In quantum mechanics, every outcome is a complex `probability amplitude.' The absolute square of the probability amplitude specifies the probability of the outcome, so the sum of absolute squares of probability amplitudes for a given observable must be 1, i.e., the L2-norm of probability amplitude must be normalised to 1. 
\subsection{Belief States and Quantum States}\label{qstate-section}

Classical discrete latent variable models represent uncertainty %about the latent state 
with a probability distribution %Classically, a discrete random variable $X$ has outcomes with associated probabilities $p_i$ that are real, non-negative numbers between 0 and 1. A discrete probability distribution 
using a vector $\vec{x}$ whose entries describe the probability of being in the corresponding system state. Each entry is real and non-negative, and the entries sum to 1. 
In general, we refer to the run-time system component that maintains a state estimate of the latent variable as an `observer', and we refer to the observer’s state as a `belief state.' % \sid{Do we need to define observer, since it doesn't come up again?}. 
A common example is the belief state that results from conditioning on observations in an HMM. \\

%When we are uncertain about system state, we  `belief state' which is a probability distribution over a discrete set of system states that can be expressed as a real non-negative vector that sums to 1. 
In quantum mechanics, the quantum state of a particle $A$ can be written using Dirac notation as $|\psi\rangle_A$, a column-vector in some orthonormal basis (the row-vector is the complex-conjugate transpose $\langle\psi | = (|\psi\rangle)^\dag$) with each entry being the `probability amplitude' corresponding to that system state.  The squared norm of the probability amplitude for a system state is the probability of observing that state, so the sum of squared norms of probability amplitudes over all the system states must be 1 to conserve probability. 
For example, {\small $|\psi\rangle = \begin{bmatrix} \frac{1}{\sqrt{2}} & \frac{-i}{\sqrt{2}} \end{bmatrix}^\dagger$}\normalsize is a valid quantum state, with basis states 0 and 1 having equal probability {\small $\left\|\frac1{\sqrt{2}}\right\|^2 = \left\|\frac{i}{\sqrt{2}}\right\|^2 = \frac12$}\normalsize. However, unlike classical belief states such as {\small $\vec{x} = \begin{bmatrix} \frac12 & \frac12 \end{bmatrix}^T$}\normalsize, where the probability of different states reflects an ignorance of the underlying system, a pure quantum state like the one described above is the \emph{true} description of the system; the system is in both states simultaneously. \\ %Indeed, such phenomena as `superposition' and `entanglement' are what make quantum states so weird.

But how can we describe classical mixtures of quantum systems (`mixed states'), where we maintain classical uncertainty about the underlying quantum states? Such information can be captured by a `density matrix.' %\boots{may need a bit more here to clarify. In the previous section may want to differentiate between a system state (indicator vector) and a belief state, a discrete distribution over system states. Then here, may want to discuss pure vs mixed states, and how that relates to the probability amplitude}
%\sid{I've added these descriptions, but is there a need to discuss `indicator' states? They don't come up later...}%, and this is the \emph{true} description of the underlying state. 
%
Given a mixture of $N$ quantum systems, each with probability $p_i$, the density matrix for this ensemble is defined as follows:
\begin{align}
\hat{\rho} = \sum_i^N p_i|\psi_i\rangle\langle\psi_i|
\end{align}
\normalsize
\noindent The density matrix is the general quantum equivalent of the classical belief state $\vec{x}$ and has diagonal elements representing the probabilities of being in each system state. %and has the appealing property that the diagonal elements are real, non-negative numbers between 0 and 1 that represent the probabilities of the system being in the state corresponding to the index. 
Consequently, the normalization condition is $\text{tr}(\hat{\rho}) = 1$. The off-diagonal elements represent quantum coherences and entanglement, which have no classical interpretation. The density matrix $\hat{\rho}$ can be used to describe the state of any quantum system.\\ %It must also be noted that the density matrix is not unique; different quantum systems can have the same density matrix. 

The density matrix can also be extended to represent the joint state of multiple variables, or that of `multi-particle' systems, to use the physical interpretation. If we have density matrices $\hat{\rho}_A$ and $\hat{\rho}_B$ for two qudits (a $d$-state quantum system, akin to qubits or `quantum bits' which are 2-state quantum systems) $A$ and $B$, we can take the tensor product to arrive at the density matrix for the joint state of the particles, as $\hat{\rho}_{\text{AB}} = \hat{\rho}_A \otimes \hat{\rho}_B$. As a valid density matrix, the diagonal elements of this joint density matrix represent probabilities; $\text{tr}\left(\hat{\rho}_{\text{AB}}\right) = 1$, and the probabilities correspond to the states in the Cartesian product of the basis states of the composite particles. %This joint density matrix makes it easy to think about the joint state evolution and operations on both particles simultaneously. %, as we will see later. 
In this paper, the joint density matrix will serve as the analogue to classical joint probability distribution, with the off-diagonal terms encoding extra `quantum' information. \\

%Furthermore, 
Given the joint state of a multi-particle system, we can examine the state of just one or few of the particles using the `partial trace' operation, where we trace over the diagonal elements of the particles we wish to disregard. This lets us recover a `reduced density matrix' for a subsystem of interest. The partial trace for a two-particle system $\hat{\rho}_{AB}$ where we trace over the second particle to obtain the state of the first particle is: %shown in equation \ref{tr}:
\begin{equation}
\hat{\rho}_A = \text{tr}_B\left(\hat{\rho}_{AB}\right) = \sum_{j} {_B}\langle j|\hat{\rho}_{AB}|j\rangle_B \label{tr}
\end{equation}
For our purposes, this operation will serve as the quantum analogue of classical marginalization. 
%\boots{I think projection operators/conditioning really needs to go here, after marginalization. You can then insert table 1. I think you should only start to talk about state evolution and observations after introducing HMMs below.}\sid{I'm pulling the section on projection operators under state evolution here...}
Finally, we discuss the quantum analogue of `conditioning' on an observation. In quantum mechanics, the act of measuring a quantum system can change the underlying distribution, i.e., collapses it to the observed state in the measurement basis, and this is represented mathematically by applying von Neumann projection operators (denoted $\hat{P}_{y}$ in this paper) to density matrices describing the system. One can think of the projection operator as a matrix of zeros with ones in the diagonal entries corresponding to observed system states. If we are only observing one part of a larger joint system, the system collapses to the states where that subsystem had the observed result. For example, suppose we have the following density matrix, for a two-state two-particle system with basis $\{|0\rangle_A|0\rangle_B, |0\rangle_A|1\rangle_B, |1\rangle_A|0\rangle_B, |1\rangle_A|1\rangle_B \}$:

\begin{equation}
\hat{\rho}_{AB} = \begin{bmatrix} 0.25 & 0 & 0 & 0\\
							 0 & 0.25 & -0.5 & 0 \\
                             0 & -0.5 & 0.25 & 0 \\
                             0 & 0 & 0 & 0.25 \end{bmatrix}
\end{equation}

Suppose we measure the state of particle $B$, and find it to be in state $|1\rangle_B$. The corresponding projection operator is {\scriptsize $\hat{P}_{1_B} = \begin{bmatrix} 0 & 0 & 0 & 0\\
							 0 & 1 & 0 & 0 \\
                             0 & 0 & 0 & 0 \\
                             0 & 0 & 0 & 1 \end{bmatrix}$} and the collapsed state is now:
{\scriptsize                     
$\hat{\rho}_{AB} = \hat{P}_{1_B}\hat{\rho}_{AB}\hat{P}_{1_B}^\dagger \overset{normalize}{\longrightarrow} 							  \begin{bmatrix} 0 & 0 & 0 & 0\\
							 0 & 0.5 & 0 & 0 \\
                             0 & 0 & 0 & 0 \\
                             0 & 0 & 0 & 0.5 \end{bmatrix}$}. When we trace over particle $A$ to get the state of particle $B$, the result is {\scriptsize $\hat{\rho}_B = \begin{bmatrix} 0 & 0 \\ 0 & 1 \end{bmatrix}$}, reflecting the fact that particle $B$ is now in state $|1\rangle_B$ with certainty. Tracing over particle $B$, we find {\scriptsize $\hat{\rho}_A = \begin{bmatrix} 0.5 & 0 \\ 0 & 0.5 \end{bmatrix}$}, indicating that particle $A$ still has an equal probability of being in either state. Note that measuring $|1\rangle_B$ has changed the underlying distribution of the system $\hat{\rho}_{AB}$; the probability of measuring the state of particle $B$ to be $|0\rangle_B$ is now 0, whereas before measurement we had a $0.25 + 0.25 = 0.5$ chance of measuring $|0\rangle_B$. This is unlike classical probability where measuring a variable doesn't change the joint distribution. We will use this fact when we construct our quantum circuit to simulate HMMs.\\

%\sid{Below is new addition going into more detail about conditioning}
Thus, if we have an $n$-state quantum system that tracks a particle's evolution, and an $s$-state quantum system that tracks the likelihood of observing various outputs as they depend (probabilistically) on the $n$-state system, upon observing an output $y$, we apply the projection operator $\hat{P}_y$ on the joint system, and trace over the second particle to obtain the $n$-state system conditioned on observation $y$.

\begin{table}[h]\small
\caption{Comparison between classical and quantum representations}
\label{sample-table2}
  \begin{center}
  \begin{tabular}{|ll | ll|}
  \hline
      \multicolumn{2}{|c|}{\bf Classical probability} &
      \multicolumn{2}{c|}{\bf Quantum Analogue}  \\
      \hline\hline
      {Description} & {Representation} & {Representation} & {Description} \\
     \hline
    Belief State & $\vec{x}$  & $\hat{\rho}$ & Density Matrix \\
    Joint Distribution & $\vec{x}_1 \otimes \vec{x}_2$ & $\hat{\rho}_{X_1} \otimes \hat{\rho}_{X_2}$ & Multi-particle Density Matrix \\
    Marginalization & $\vec{x} = \sum_y \vec{x} \otimes \vec{y}$ & $\hat{\rho}_X = \text{tr}_Y(\hat{\rho}_{XY})$ & Partial Trace \\
  {Conditional probability}  & {$P(\vec{x}|y) = \frac{P(y,\vec{x})}{P(y)}$}  & $P(\text{states} \,| y) = \text{tr}_Y(\hat{P}_y\hat{\rho}_{XY}\hat{P}_y^\dagger) $ & Projection + Partial Trace\\ %, OR  \\& & $P(\text{states} \,| y) \propto \text{diag}\left(\hat{K}_y\hat{\rho}\hat{K}_y^\dagger\right)$ & Kraus Operators \\
  \hline
  \end{tabular}
  \end{center}
\end{table}

\newpage

\subsection{Hidden Markov Models} \label{sec:hmm}

%\subsubsection{Brief overview of classical Hidden Markov Models}
Classical Hidden Markov Models (HMMs) are graphical models used to model dynamic processes that exhibit Markovian state evolution.  %, i.e., the future state is independent of the past state if the present state is known. Typically, HMMs are used to model the time evolution of some underlying latent state which we cannot observe. We instead only have access to some observed state that depends on the hidden state in some probabilistic manner.
Figure \ref{fig:hmm} depicts a classical HMM, where the transition matrix $\mathbf{A}$ and emission matrix $\mathbf{C}$ are column-stochastic matrices that determine the Markovian hidden state-evolution and observation probabilities respectively. %If both the Markov process that determines the evolution of the hidden state, and the relationship between the underlying hidden state and the produced observable are known, we can use 
Bayesian inference can be used to track the evolution of the hidden variable.

\begin{figure}[h!]
\centerline{ \includegraphics[scale=0.15]{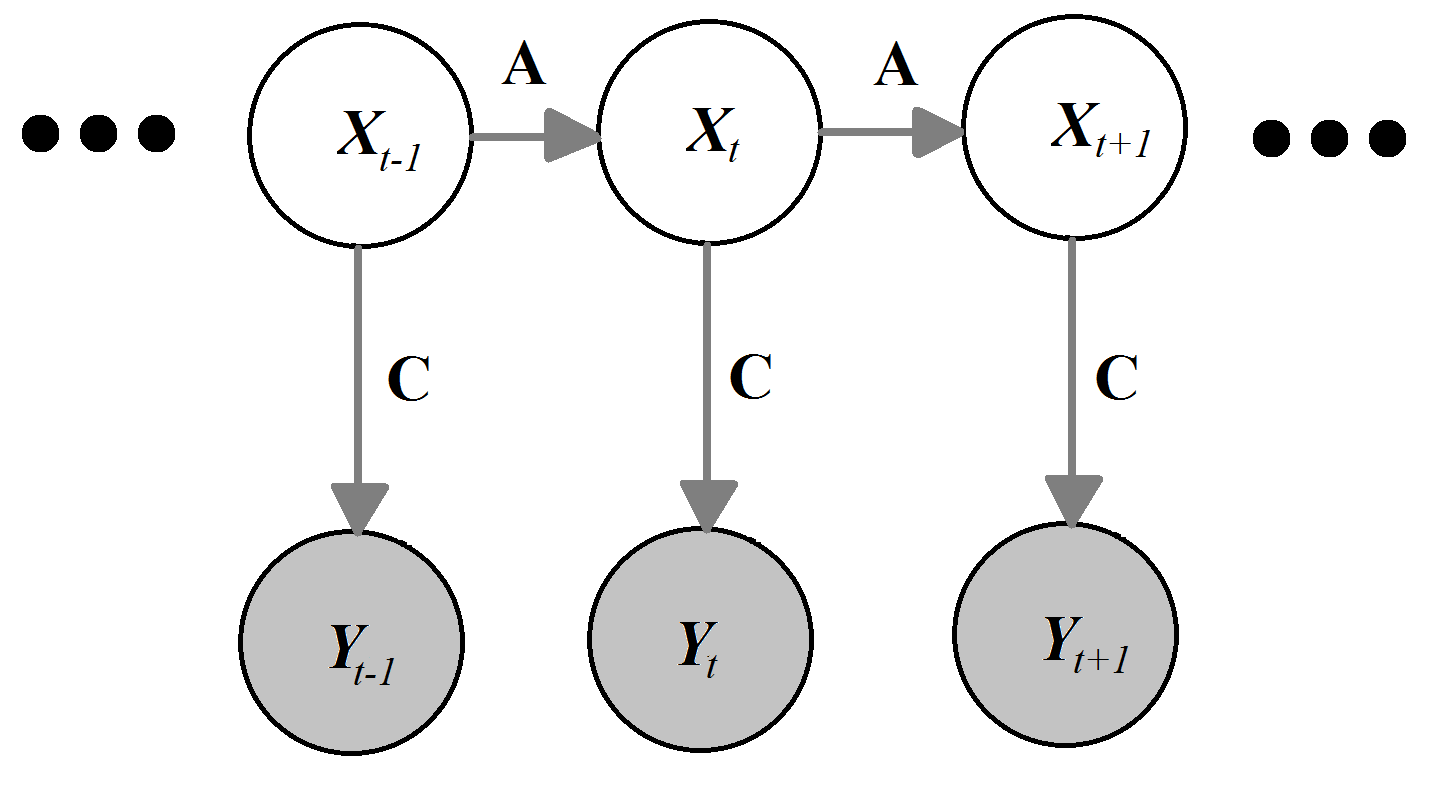}}
\caption{Hidden Markov Model}
    \label{fig:hmm}
\end{figure}

%However, we might also want to do the opposite, i.e., fit a HMM to some dynamic process. To do this, we want to observe some variable over time and characterize the Markov process that determines the evolution of some underlying hidden state, as well the relationship between those underlying hidden states and the observed output. Existing methods for learning HMMs include Expectation-Maximization, but these methods don't often yield accurate models\cite{1} and are prone to local minima\cite{2}. \\

The belief state at time $t$ is a probability distribution over states, and prior to any observation is written as:
\begin{equation}
\vec{x}_t' = \mathbf{A}\vec{x}_{t-1}
\end{equation}
The probabilities of observing each output at time $t$ is given by the vector $\vec{s}$:
\begin{equation}
\vec{s}_t = \mathbf{C}\vec{x}_t'= \mathbf{CA}\vec{x}_{t-1}
\end{equation}
We can use Bayesian inference to write the belief state vector after conditioning on observation $y$:
\begin{equation} \vec{x}_t = \frac{\text{diag}(\mathbf{C}_{(y,:)})\mathbf{A}\vec{x}_{t-1}}{\mathds{1}^T\text{diag}(\mathbf{C}_{(y,:)})\mathbf{A}\vec{x}_{t-1}} 
\label{eq:hmmac}
\end{equation}
where $\text{diag}(\mathbf{C}_{(y,:)})$ is a diagonal matrix with the entries of the $y$th row of $\mathbf{C}$ along the diagonal, and the denominator renormalizes the vector $\vec{x}_t$. \\

An alternate representation of the Hidden Markov Model uses `observable' operators (\citet{jaeger2000observable}). Instead of using the matrices $\mathbf{A}$ and $\mathbf{C}$, we can write $\mathbf{T}_y = \text{diag}(\mathbf{C}_{(y,:)})\mathbf{A}$. There is a different operator $\mathbf{T}_y$ for each possible observable output $y$ and $[\mathbf{T}_y]_{ij} = P(y; i_t|j_{t-1})$. We can then rewrite Equation \ref{eq:hmmac} as:
\begin{equation} \vec{x}_t = \frac{\mathbf{T}_y\vec{x}_{t-1}}{\mathds{1}^T\mathbf{T}_y\vec{x}_{t-1}} \tag{4}
\end{equation}
%With the observable operator representation of HMMs, 
If we observe outputs $y_1, \ldots, y_n$, we apply $\mathbf{T}_n\ldots\mathbf{T}_1\vec{x}$ and take the sum of the resulting vector to find the probability of observing the sequence, or renormalize to find the belief state after the final observation.

\section{Hidden Quantum Markov Models}
\subsection{A Quantum Circuit to Simulate HMMs}\label{qcirc-section}

Let us now contrast state evolution in quantum systems with state evolution in HMMs. 
%
%In an HMM, state evolution is carried out using column-stochastic matrices that are right-multiplied by the state vector $\vec{x}$ and then re-normalized to give valid belief states. 
The quantum analogue of observable operators is a set of non-trace-increasing Kraus operators \{$\hat{K}_i$\} that are completely positive (CP) linear maps. Trace-preserving Kraus operators  $\sum_i^N\hat{K}^\dagger_i\hat{K}_i = \mathbb{I}$, can map a density operator to another density operator. Trace-decreasing Kraus operators $\sum_i^N\hat{K}^\dagger_i\hat{K}_i < \mathbb{I}$, represent operations on a smaller part of a quantum system that can allow probability to `leak' to other states that aren't being considered. This paper will formulate problems such that all sets of Kraus operators are trace-preserving. When there is only one operator in the set, i.e., $\hat{U}$ such that $\hat{U}^\dagger\hat{U} = \mathbb{I}$, then $\hat{U}$ is a unitary matrix. %(the complex version of a real, orthonormal matrix), and no renormalization is required. after applying a unitary matrix.
Unitary operators generally model the evolution of the `whole' system, which may be high-dimensional. But if we care only about tracking the evolution of a smaller sub-system, which may interact with its environment, we can use Kraus operators. The most general quantum operation that can be performed on a density matrix is $\hat{\rho}' = \frac{\sum_i^M K^\dagger_i\hat{\rho}K_i}{\text{tr}\left(\sum_i^M K^\dagger_i\hat{\rho}K_i\right)}$, where the denominator re-normalizes the density matrix.\\ % Indeed, when this smaller sub-system `preserves' probability in its interactions with the environment, the operators are trace-preserving, while if some `probability' leaks to the environment, the operators become trace-decreasing.

%Now we 
Now, how do we simulate classical HMMs on quantum circuits with qudits, where computation is done using unitary operations? %Keep in mind that this construction is explicitly designed to perform the same operations as a classical HMM. 
%Once we have this, we can see how this circuit compares to the circuit that corresponds to the formulations of Hidden Quantum Markov Models by \citet{monras2010hidden}. 
There is no general way to convert column-stochastic transition and emission matrices to unitary matrices, so we %adopt the approach of 
prepare `ancilla'  particles and construct %explicitly constructing 
unitary matrices (see Algorithm \ref{alg:coltoun}) to act on the joint state. % shows how we can construct such unitary matrices. 
We then trace over one particle to obtain the state of the other.

\begin{algorithm*}[h]
\caption{$s \times n$ Column-Stochastic Matrix to $ns \times ns$ Unitary Matrix}
\begin{algorithmic}[1]
\INPUT $s \times n$ Column-Stochastic Matrix ${\bf A}$
\OUTPUT $ns \times ns$ block diagonal Unitary Matrix $\hat{U}$ with $n$ blocks of $s \times s$ unitary matrices, zeros everywhere else
\STATE \textbf{Construct an $s \times s$ unitary matrix from each column of $A$}: Let $c_i$ denote the $i$th column of $\mathbf{A}$. First create an $s \times s$ matrix whose each row is the square root of column $c_i$. Find the null space of this matrix, and you will get the $s-1$ vectors that are linearly independent of $c_i$. Make $c_i$ the first column, and the remaining $s-1$ vectors the other columns of an $s \times s$ matrix.
\STATE \textbf{Stack each $s \times s$ matrix on a diagonal}: Follow step 1 for each column of $A$, and obtain $n$ unitary matrices of dimension $s \times s$. Create a block diagonal matrix with each of these smaller unitary matrices along the diagonal, and you will obtain an $ns \times ns$ dimensional unitary matrix $\hat{U}$.

\STATE \bf{Note:} The unitary operator constructed here is designed to be applied on a density matrix tensored with an environment density matrix prepared with zeros everywhere except $\hat{\rho}_{1,1} = 1$.
\end{algorithmic}
\label{alg:coltoun}
\end{algorithm*}

%boots{I think quantum state evolution should be introduced somewhere around here.}\sid{added the section above the previous paragraph}

Figure \ref{fig:qcirchmm} illustrates a quantum circuit constructed with these unitary matrices. By preparing the `ancilla' states $\hat{\rho}_{X_{t}}$ and $\hat{\rho}_{Y_{t}}$ appropriately  (i.e., entirely in system state 1, represented by a density matrix of zeros except $\hat{\rho}_{1,1}= 1$), we %can use Algorithm \ref{alg:coltoun} to 
construct $\hat{U}_1$ and $\hat{U}_2$ from transition matrix $\mathbf{A}$ and emission matrix $\mathbf{C}$, respectively.  %Note that most of the entries in these matrices %become irrelevant because of how we've prepared the particles $\hat{\rho}_x$ and $\hat{\rho}_Y$. They 
%are ``placeholders'' to satisfy the unitarity condition required for quantum operations while simulating classical operations. 
$\hat{U}_1$ evolves $\left( \hat{\rho}_{t-1} \otimes \hat{\rho}_{X_t}\right)$ to perform Markovian transition, while $\hat{U}_2$ updates $\hat{\rho}_{Y_t}$ to contain the probabilities of measuring each observable output. At runtime, we measure $\hat{\rho}_{Y_t}$ which changes the joint distribution of $\hat{\rho}_{X_t} \otimes \hat{\rho}_{Y_t}$ to give the updated conditioned state $\hat{\rho}_t$. Mathematically, this is equivalent to applying a projection operator on the joint state and tracing over $\hat{\rho}_{Y_t}$.  %This %scheme 
%implements a Bayes filter with %using 
%a quantum circuit.
Thus, the forward algorithm corresponding to Figure \ref{fig:qcirchmm} that explicitly models a hidden Markov Model on a quantum circuit can be written as:
\small
\begin{equation} \hspace{-.3mm}\hat{\rho}_t \propto \text{tr}_{\hat{\rho}_{Y_{t}}}\left(\hat{P}_y\hat{U}_{2}\left(\text{tr}_{\hat{\rho}_{t-1}}(\hat{U}_1(\hat{\rho}_{t-1} \otimes \hat{\rho}_{X_{t}})\hat{U}_1^\dagger) \otimes \hat{\rho}_{Y_{t}}\right)\hat{U}_{2}^\dagger\hat{P}_y^\dagger\right)\hspace{-.7mm}
\end{equation}
\normalsize
%boots{observation updates should be introduced somewhere around here}\sid{I thought we wanted to introduce it above the table comparing quantum vs classical? I've added a little blurb to the paragraph above anyway.}
%However, we needn't stop here. 
We can simplify this circuit to use Kraus operators acting on the lower-dimensional state space of $\hat{\rho}_{X_t}$.  %Observe that we always prepare $\hat{\rho}_{Y_{t}}$ in the same state, tensor it with $\hat{\rho}_{X}$ to obtain a joint state on which we apply the unitary operation $\hat{U}_2$, and apply a projection operator $\hat{P}_{y}$ to update the joint state based on the observation, and finally trace over $\hat{\rho}_{Y_t}$ to obtain the density matrix having conditioned on the observation. 
Since we always prepare $\hat{\rho}_{Y_{t}}$ in the same state, the operation $\hat{U}_2$ on the joint state of $\hat{\rho}_{X_{t}} \otimes \hat{\rho}_{Y_{t}}$ followed by the application of the projection operator $\hat{P}_y$ can be more concisely written as a Kraus operator on just $\hat{\rho}_{X_{t}}$, so that we need only be concerned with representing how the particle $\hat{\rho}_{X_t}$ evolves. We would need to construct a set of Kraus operators $\{\hat{K}_{y}\}$ for each observable output $y$, such that $\sum_y (\hat{K}_{y})^\dagger (\hat{K}_{y}) = \mathds{I}$.\\

%While 
Tensoring with an ancilla qudit and tracing over a qudit %aren't usually represented as matrix operations, in our case we can actually represent these operations with matrices 
can be achieved with an $ns \times n$ matrix $W$ and an $n \times ns$ matrix $V_y$ respectively, since we always prepare our ancilla qudits in the same state (details on constructing these matrices can be found in the Appendix), so that:
\begin{equation}
\begin{split}
\hat{\rho}_{X_t} \otimes \hat{\rho}_{Y_t} &\longrightarrow W\hat{\rho}_XW^\dagger \\
{\text{tr}_{\hat{\rho}_{Y_t}}\left(\hat{P}_{y}\hat{U}_2 W\hat{\rho}_{X_t} 
W^\dagger\hat{U}_2^\dagger\hat{P}_{y}^\dagger\right)} &\longrightarrow { V_y\hat{P}_{y}\hat{U}_2 W\hat{\rho}_{X_t} W^\dagger\hat{U}_2^\dagger\hat{P}_{y}^\dagger V^\dagger_y}
\end{split}
\end{equation}

\begin{figure}[h]
\centering
  \begin{subfigure}[b]{0.4\textwidth}
    \mbox{\Qcircuit @C=1em @R=1em {
\lstick{\hat{\rho}_{t-1}} & \multigate{1}{\hat{U}_1} & \qw & \qw &  &  & & \\
\lstick{\hat{\rho}_{X_{t}}} & \ghost{\mathcal{F}} & \qw & \multigate{1}{\hat{U}_2} & \qw & \qw &  {\hat{\rho}_t} \\
\lstick{\hat{\rho}_{Y_{t}}} & \qw & \qw & \ghost{\mathcal{F}} & \measureD{}\\
}}
		\caption{Full Quantum Circuit to implement HMM}		
        \label{fig:qcirchmm}
  \end{subfigure}
  \hspace{0.5in}
  \begin{subfigure}[b]{0.4\textwidth}
    \mbox{\Qcircuit @C=1em @R=1em {
\lstick{\hat{\rho}_{t-1}} & \multigate{1}{\hat{U}_1} & \qw & \qw & & & & \\
\lstick{\hat{\rho}_{X_{t}}} & \ghost{\mathcal{F}} & \qw & \gate{\hat{K}_{y_{t-1}}} & \qw & \qw &  {\hat{\rho}_t} \\
}}		
        \caption{Simplified scheme to implement HMM}
        \label{fig:qcirchmm2}
  \end{subfigure}
\label{fig:qcircall}
\caption{HMM implementation on quantum circuits}
\end{figure}
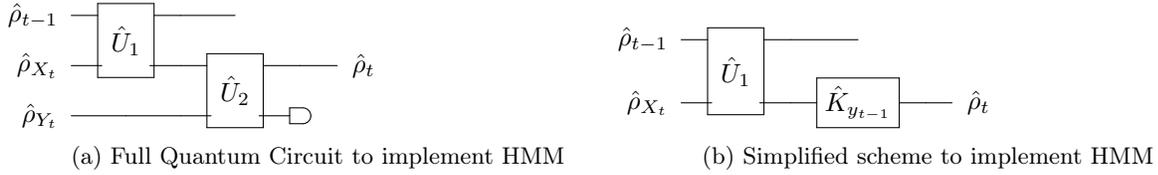

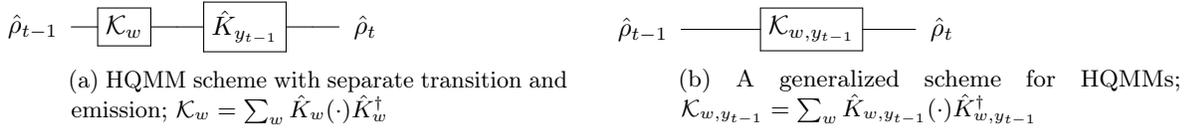
\begin{figure}[h!]
\centering
  \begin{subfigure}[b]{0.4\textwidth}
    \mbox{\Qcircuit @C=1em @R=1em {
\lstick{\hat{\rho}_{t-1}} & \gate{\mathcal{K}_w} & \qw & \gate{\hat{K}_{y_{t-1}}} & \qw & \qw &  {\hat{\rho}_t} \\}}
\caption{HQMM scheme with separate transition and emission; $\mathcal{K}_w = \sum_w\hat{K}_w(\cdot)\hat{K}_w^\dagger$}		
        \label{fig:hqmm1}
  \end{subfigure}
  \hspace{0.5in}
  \begin{subfigure}[b]{0.4\textwidth}
    \mbox{\Qcircuit @C=1em @R=1em {
\lstick{\hat{\rho}_{t-1}} & \qw & \qw & \gate{\mathcal{K}_{w, y_{t-1}}} & \qw & \qw &  {\hat{\rho}_t} \\
}}		
        \label{fig:hqmm2}
        \caption{A generalized scheme for HQMMs; $\mathcal{K}_{w,y_{t-1}} = \sum_w\hat{K}_{w,y_{t-1}}(\cdot)\hat{K}_{w,y_{t-1}}^\dagger$}
  \end{subfigure}
  \label{fig:hqmm}
\caption{Quantum schemes implementing classical HMMs}
\end{figure}

We can then construct Kraus operators such that $\hat{K}_{y} = V_y\hat{P}_{y}\hat{U}_2 W$. %\boots{Should show how to build V somewhere (maybe appendix?)} 
Figure \ref{fig:qcirchmm2} shows this updated circuit, where $\hat{U}_1$ is still the quantum implementation of the transition matrix and $\hat{K}_{y_t}$ is the quantum implementation of the Bayesian update after observation. This scheme to model a classical HMM can be written as:
\begin{equation} \hat{\rho}_t = \frac{\hat{K}_{y_{t-1}}\left(\text{tr}_{\hat{\rho}_{t-1}}(\hat{U}_1(\hat{\rho}_{t-1} \otimes \hat{\rho}_{X_{t}})\hat{U}_1^\dagger)\right)\hat{K}_{y_{t-1}}^\dagger}{\text{tr}\left(\hat{K}_{y_{t-1}}\left(\text{tr}_{\hat{\rho}_{t-1}}(\hat{U}_1(\hat{\rho}_{t-1} \otimes \hat{\rho}_{X_{t}})\hat{U}_1^\dagger)\right)\hat{K}_{y_{t-1}}^\dagger\right)}
\end{equation}

We can similarly %even go one step further, to 
simplify $\hat{U}_1$ to a set of Kraus operators. % as well. To do this, 
%We use a procedure similar to the one we used to go from Figure \ref{fig:qcirchmm} to Figure \ref{fig:qcirchmm2}, where we took the unitary operator $\hat{U}_2$ followed by a measurement on $\hat{\rho}_{Y_{t}}$ and wrote in terms of a set of Kraus operators $\{K_{y_t}\}$, one for each observable $s$. This time, 
We write the unitary operation $\hat{U}_1$ in terms of a set of $n$ Kraus operators $\{\hat{K}_w\}$ as if we were to measure $\hat{\rho}_{t-1}$ immediately after the operation $\hat{U}_1$. However, instead of applying one Kraus operator associated with measurement as we do with Figure \ref{fig:qcirchmm2}, we sum over all of $n$ possible `observations', as if to `ignore' the observation on $\hat{\rho}_{t-1}$. Post-multiplying each Kraus operator in $\{\hat{K}_w\}$ with each operator in $\{\hat{K}_{y}\}$, we have a set of Kraus operators $\{\hat{K}_{w_y,y}\}$ that can be used to model a classical HMM as follows (the full procedure is described in Algorithm~\ref{alg:hqmm}):
\begin{equation}
\hat{\rho}_t = \frac{\sum_{w_y} \hat{K}_{w_y,y_{t-1}}\hat{\rho}_{t-1}\hat{K}_{w_y,y_{t-1}}^\dagger}{\text{tr}\left(\sum_{w_y} \hat{K}_{w_y,y_{t-1}}\hat{\rho}_{t-1}\hat{K}_{w_y,y_{t-1}}^\dagger\right)}
\label{eq:hqmm}
\end{equation}

We believe this procedure to be a useful illustration of performing classical operations on graphical models using quantum circuits. In practice, we needn't construct the Kraus operators in this peculiar fashion to simulate HMMs; an equivalent but simpler approach is to construct observable operators $\{ {\bf T}_y \}$ from transition and emission matrices as described in section \ref{sec:hmm}, and set the $w$th column of $\hat{K}_{w_y, y}^{(:, w)} = \sqrt{{\bf T}_y^{(:,w)}}$, with all other entries being zero. This ensures $\sum_{w_y, y} \hat{K}^\dagger_{w_y,y} K_{w_y,y} = \mathds{I}$.

\begin{algorithm}[h]
    \caption{Simulating Hidden Markov Models with HQMMs}
    \label{alg:hqmm}
  \begin{algorithmic}[1]
    \INPUT Transition Matrix $\mathbf{A}$ and Emission Matrix $\mathbf{C}$
    \OUTPUT Belief State  as $diag(\hat{\rho})$, or $P(y_1,\ldots,y_n|\mathds{D})$ where $\mathds{D}$ is the HMM
        \STATE \textbf{Initialization:}
    	\STATE \,\,\,\, Let $s=\#\text{outputs}$, $n=\#\text{hidden states}$, $y_t =$ observed output at time $t$
    	\STATE \,\,\,\, Prepare density matrix $\hat{\rho}$ in some initial state. $\hat{\rho} = diag(\pi)$ if priors $\pi$ are known.
        \STATE \,\,\,\, Construct unitary matrices $\hat{U}_1$ and $\hat{U}_2$ from $\mathbf{A}$ and $\mathbf{C}$ respectively using Algorithm \ref{alg:coltoun} (in appendix)
        \STATE \,\,\,\, Using $\hat{U}_1$ and $\hat{U}_2$, construct a set of $n$ Kraus Operators $\{\hat{K}_w\}$ and $s$ Kraus operators $\{\hat{K}_{y}\}$, with $\hat{K}_{w} = V_w\hat{U}_1 W$ and $\hat{K}_{y} = V_y\hat{P}_{y}\hat{U}_2 W$ and combine them into a set $\{ \hat{K}_{w_y,y} \}$ with $\hat{K}_{w_y,y} = \hat{K}_y\hat{K}_w$. (Matrix $W$ tensors with an ancilla, Matrix $V_y$ carries out a trivial partial trace operation and summing over $V_w$ for all $w$ carries out the proper partial trace operation. Details in appendix).

    \FOR{$t=1:T$}
      \STATE $\hat{\rho}_{t+1} \leftarrow \sum_{w_y} \hat{K}_{w,y_{t}}\hat{\rho}_{t-1}(\hat{K}_{w_y,y_i})^\dagger$
	\ENDFOR
    \STATE tr($\hat{\rho}_T$) gives the probability of the sequence; renormalizing $\hat{\rho}_T$ gives the belief state on the diagonal.
  \end{algorithmic}
\end{algorithm}

\subsection{Formulating HQMMs}\label{form_sect}

\citet{monras2010hidden} formulate Hidden Quantum Markov Models by defining a set of Kraus operators $\{\hat{K}_{w_y,y}\}$, where each observable $y$ has $w_y$ associated Kraus operators acting on a state with hidden dimension $n$, and they form a complete set such that $\sum_{w,y} \hat{K}^\dagger_{w,y} \hat{K}_{w,y} = \mathds{I}$. The update rule for a quantum operation is exactly the same as Equation \ref{eq:hqmm}, which we arrived at by first constructing a quantum circuit to simulate HMMs with known parameters and then constructing operators $\{\hat{K}_{w,y}\}$ in a very peculiar way. %As it turns out, 
The process outlined in the previous section is a particular parameterization of HQMMs to model HMMs. If %, instead of trying to construct these Kraus operators to simulate a classical HMM, 
we let the operators $\hat{U}_1$ and $\hat{U}_2$ be any unitary matrices, or the Kraus operators be any set of complex-valued matrices that satisfy $\sum_{w_y, y} \hat{K}^\dagger_{w_y,y} K_{w_y,y} = \mathds{I}$, then we have a general and fully quantum HQMM.\\

Indeed, Equation \ref{eq:hqmm} gives the forward algorithm for HQMMs. To find the probability of emitting an output $y$ given the previous state $\hat{\rho}_{t-1}$, we simply take the trace of the numerator in Equation \ref{eq:hqmm}, i.e., $p(y_{t}|\hat{\rho}_{t-1}) = \text{tr}\left(\sum_{w_y} \hat{K}_{w_y,y_{t-1}}\hat{\rho}_{t-1}\hat{K}_{w_y,y_{t-1}}^\dagger\right)$.\\

The number of parameters for a HQMM is determined by the number of latent states $n$, outputs $s$, and Kraus operators associated with an output $w$. %Note that 
To exactly simulate HMM dynamics with an HQMM, we need $w=n$ as per the derivation above. However, this constraint need not hold for a general HQMM, which can have any number of Kraus operators we apply and sum for a given output. $w$ can also be thought of as the dimension of the ancilla $\hat{\rho}_{X_t}$ that we tensor with in Figure \ref{fig:qcirchmm} before the unitary operation $\hat{U}_1$. Consequently, if we set $w=1$, we do not tensor with an additional particle, but model the evolution of the original particle as unitary. %Indeed, we will be using such simpler models in some of our experiments. 
In all, a HQMM requires learning $n^2sw$ parameters, which is a factor $w$ times more than a HMM with the observable operator representation which has $n^2s$ parameters. The canonical representation of HMMs with with an $n\times n$ transition matrix and an $s \times n$ emission matrix has $n^2 + ns$ parameters.\\

%\sid{Note: may need to cut this.} 
 HQMMs can also be seen as a complex-valued extension of norm-observable operator models defined by \citet{zhao2010norm}. Indeed, the HQMM we get by applying Algorithm $\ref{alg:hqmm}$ on a HMM is also a valid NOOM (allowing for multiple operators per output), %The previous authors have shown that OOMs and NOOMs can model processes that cannot be modeled by any finite-dimensional HMM, and any NOOM can be be written as an OOM. 
implying that HMMs can be simulated by NOOMs. We can also state that both HMMs and NOOMs can be simulated by HQMMs (the latter is trivially true). While \cite{zhao2010norm} show that any NOOM can be written as an OOM, the exact relationship between HQMMs and OOMs is not straightforward owing to the complex entries in HQMMs and requires further investigation. % \sid{Do we need some concrete measure of expressiveness, and is this too handwavy?}

\section{An Iterative Algorithm For Learning HQMMs} \label{sec:la}

%Having described the parameterization of basic quantum latent variable models with Kraus operators, we naturally turn to the question of how to learn these parameters from data. 
We present an iterative maximum-likelihood algorithm to \emph{learn} Kraus operators to model sequential data using an HQMM. %This algorithm is closely related to \citet{noomreport} algorithm to learn the parameters for a NOOM. 
Our algorithm is general enough that it can be applied to \emph{any} quantum version of a classical machine learning algorithm for which the loss is defined in terms of the Kraus operators to be learned. \\

%The goal of learning a HMM is one of finding the matrices $\mathbf{A}, \mathbf{C}$ (or $\{ \mathbf{T}_y \}$) given observations $\{y_t\}$. 
%In this representation, we want to learn $\{ \mathbf{T}_y \}$ for all outputs $y$. Observe that when $\mathbf{T}_y$ acts on a state, it pushes it off the simplex inward, and needs to be renormalized. If we ignore the output, the matrix $\mathbf{T} = \sum_y \mathbf{T}_y$ should be the same as $\mathbf{A}$ evolving the state forward in time.

We begin by writing the likelihood of observing some sequence $y_1,\ldots,y_T$. Recall that for a given output $y$, we apply the $w$ Kraus operators associated with that observable in the `forward' algorithm, as $\sum_{w_{y}}\hat{K}_{w_{y}, y}(\cdot)\hat{K}_{w_{y}, y}$. If we do not renormalize the density matrix after applying these operators, the diagonal entries contain the joint probability of the corresponding system states and observing the associated sequence of outputs. The trace of this un-normalized density matrix gives the probability of observing $y$ since we have summed over (i.e., marginalized) all the `hidden' states. Thus, the general log-likelihood of a sequence of length $n$ being predicted by a HQMM where each observable $y$ has $w_y$ associated Kraus operators is: 

\begin{equation} \mathcal{L} = \ln \text{tr}\left(\sum_{w_{y_n}}\hat{K}_{w_{y_n},y_n}\ldots \left(\sum_{w_{y_1}}\hat{K}_{w_{y_1}, y_1}\hat{\rho}_0\hat{K}_{w_{y_1},y_1}^\dagger\right)\ldots \hat{K}_{w_{y_n},y_n}^\dagger\right) 
\end{equation}

It is not straightforward to directly maximize this log-likelihood using gradient descent; we must preserve the Kraus operator constraints and long sequences can quickly lead to underflow issues.  Our approach is to learn a $nsw \times n$ matrix $\kappa^*$, which is essentially the set of $ws$ Kraus operators $\{\hat{K}_{w, y}\}$ of dimension $n \times n$, stacked vertically. 
The Kraus operators constraint requires $\sum_s \hat{K}^\dagger_s \hat{K}_s = \mathds{I}$, which implies $\kappa^\dagger\kappa = \mathds{I}$, where the columns of $\kappa$ are orthonormal. \\

\begin{algorithm}[h]
\caption{Iterative Learning Algorithm for Hidden Quantum Markov Models}\label{alg:learn2}
\begin{algorithmic}[1]
\INPUT A $M \times \ell$ matrix $Y$, where $M$ is the number of data points and $\ell$ is the length of a stochastic sequence to be modeled.
\OUTPUT A set of $ws$ of $n \times n$ Kraus operators $\{\hat{K}_{w, s}\}$ that maximize the log-likelihood of the data, where $n$ is the dimension of the hidden state, $s$ is the number of outputs, and $w$ is the number of operators per outputs.
\STATE \textbf{Initialization}: Randomly generate a set of $ws$ Kraus operators $\{\hat{K}_{w,s}\}$ of dimension $n \times n$, and stack them vertically to obtain a matrix $\kappa$ of dimension $nsw \times n$. Let $b$ be the batch size, $B$ the total number of batches to process, and $Y_b$ a $b \times \ell$ matrix of randomly chosen data samples. Let $num\_iterations$ be the number of iterations spent modifying $\kappa$ to maximize the likelihood of observing $Y_b$.

\FOR{\text{batch} = $1$:$B$}
	\STATE Randomly select $b$ sequences to process, and construct matrix $Y_b$
	\FOR{$it = 1:num\_iterations$}
    	\STATE Randomly select rows $i$ and $j$ of $\kappa$ to modify, $i < j$
        \STATE Find $\vec{w} = (\phi, \psi, \delta,\theta)$ that maximises the log-likelihood of $Y_b$ under the following update, and update:
        \begin{align*}
        \kappa^i &\leftarrow \left(e^{\nicefrac{i\phi}2}e^{i\psi}\cos(\theta)\right)\kappa^i + \left(e^{\nicefrac{i\phi}2}e^{i\delta}\sin(\theta)\right)\kappa^j \\
\kappa^j &\leftarrow \left(-e^{\nicefrac{i\phi}2}e^{-i\delta}\sin(\theta)\right)\kappa^i  + \left(e^{\nicefrac{i\phi}2}e^{-i\psi}\cos(\theta)\right)\kappa^j
\end{align*}
    \ENDFOR
\ENDFOR
\end{algorithmic}
\end{algorithm}

Let $\kappa$ be our guess and $\kappa^*$ be the \emph{true} matrix of stacked Kraus operators that maximizes the likelihood under the observed data. Then, there must exist some unitary operator $\hat{U}$ that maps $\kappa$ to $\kappa^*$, i.e., $\kappa^* = \hat{U}\kappa$. Our goal is now to find the matrix $\hat{U}$. % \boots{why does $\kappa$ have a star previously, but not here?} \sid{$\kappa$ is the guess, $\kappa^*$ is the true max-likelihood parameters. Where is the hat missing?} 
To do this, we use the fact that the matrix $\hat{U}$ can written as the product of simpler matrices ${\bf H}(i, j, \theta,\phi,\psi,\delta)$ (see appendix for proof), where

\begin{equation}
{\bf H}(i,j,\theta,\phi,\psi,\delta) = \begin{bmatrix} 1 & \cdots & 0 & \cdots & 0 & \cdots & 0 \\ \vdots & \ddots & \vdots &  &  \vdots & & \vdots \\ 0 & \cdots & e^{\nicefrac{i\phi}2}e^{i\psi}\cos\theta & \cdots & e^{\nicefrac{i\phi}2}e^{i\delta}\sin\theta & \cdots & 0 \\ \vdots &  & \vdots & \ddots  &  \vdots & & \vdots \\ 0 & \cdots & -e^{\nicefrac{i\phi}2}e^{-i\delta}\sin\theta & \cdots & e^{\nicefrac{i\phi}2}e^{-i\psi}\cos\theta & \cdots & 0 \\ \vdots & & \vdots &  &  \vdots & \ddots & \vdots \\ 0 & \cdots & 0 & \cdots & 0 & \cdots & 1\end{bmatrix}
\end{equation}

$i$ and $j$ specify the two rows in the matrix with the non-trivial entries, and the other paramters $\theta,\phi,\psi,\delta$ are angles that parameterize the non-trivial entries. The ${\bf H}$ matrices can be thought of as Givens rotations generalized for complex-valued unitary matrices. Applying such a matrix ${\bf H}(i,j,\theta,\phi,\psi,\delta)$ on $\kappa$ has the effect of combining rows $i$ and $j$ ($i < j$) of $\kappa$ like so:
\begin{equation} \label{eq1}
\begin{split}
\kappa^i &\leftarrow \left(e^{\nicefrac{i\phi}2}e^{i\psi}\cos(\theta)\right)\kappa^i + \left(e^{\nicefrac{i\phi}2}e^{i\delta}\sin(\theta)\right)\kappa^j\\
\kappa^j &\leftarrow \left(-e^{\nicefrac{i\phi}2}e^{-i\delta}\sin(\theta)\right)\kappa^i + \left(e^{\nicefrac{i\phi}2}e^{-i\psi}\cos(\theta)\right)\kappa^j
\end{split}
\end{equation}
Now the problem becomes one of identifying the sequence of ${\bf H}$ matrices that can take $\kappa$ to $\kappa^*$. Since the optimization is non-convex and the ${\bf H}$ matrices need not commute, we are not guaranteed to find the global maximum. Instead, we look for a local-max $\kappa^*$ that is reachable by only multiplying ${\bf H}$ matrices that increase the log-likelihood. To find this sequence, we iteratively find the parameters $(i,j,\theta,\phi,\psi,\delta)$ that, if used in equation \ref{eq1}, would increase the log-likelihood. To perform this optimization, we use the \texttt{fmincon} function in MATLAB that uses interior-point optimization. %\sid{This could also have a faster implementation as a computation graph and using logs to carry out derivatives, but this would require a lot more exploration, should we convey this at all?}. 
It can also be computationally expensive to find the the best rows $i,j$ to swap at a given step, so in our implementation, we randomly pick the rows $(i,j)$ to swap. See Algorithm \ref{alg:learn2} for a summary. We believe more efficient implementations are possible, but we leave this to future work.

%This algorithm can be used for learning any quantum model that can be parameterized using Kraus operators. As long as we are able to write the log-likelihood in terms of a set of Kraus operators, we can construct $\kappa$ and iteratively find the parameters $(i,j,\theta,\phi,\psi,\delta)$ which under Equation \ref{eq1} increase the log-likelihood of the observed data.
%
%Note that any time we are not at the global maximum for the log likelihood, there exists some unitary matrix $\hat{U}$ that can be applied to push us towards the global maximum. Unlike gradient-based methods which force us to stop once we reach a local maximum, this algorithm depends entirely on the ability to find the rows $i,j$ that need to be swapped to increase the log-likelihood. \sid{This should be true, but whether it could be of practical use is based on the assumption that when I break the U up into H matrices, each H operation will monotonically increase the likelihood, i.e., there are no intermediate swaps that might temporarily decrease the log-likelihood. Thinking about this again it seems like a sketchy assumption. I currently never make a move towards parameters that can decrease the log-likelihood, so if H operations do not monotonically increase the likelihood, this is useless. What do you think?}

%\boots{May want to mention that the learning algorithm can be applied to learn QNB models as well.} \sid{I think this is true but I haven't tested it to make sure.}

\section{Experimental Results}

In this section, we evaluate the performance of our learning algorithm on simple synthetic datasets, and compare it to the performance of Expectation Maximization for HMMs (\cite{rabiner1989tutorial}). We judge the quality of the learnt model using its Description Accuracy (DA) (\cite{noomreport}), defined as:
\begin{equation}
DA = f\left(1 + \frac{\log_s P(Y|\mathds{D})}{\ell}\right)
\end{equation}
where $\ell$ is the length of the sequence, $s$ is the number of output symbols in the sequence, $Y$ is the data, and $\mathds{D}$ is the model. Finally, the function $f(\cdot)$ is a non-linear function that takes the argument from $(-\infty, 1]$ to $(-1,1]$:
\begin{equation} f(x) = \left\{
\begin{array}{ll}
      x & x\geq 0 \\
      \frac{1-e^{-0.25x}}{1+e^{-0.25x}} & x < 0 \\
\end{array} 
\right. 
\end{equation}
If $DA=1$, the model perfectly predicted the stochastic sequence, while $DA>0$ would mean that the model predicted the sequence better than random.  \\

In each experiment, we generate 20 training sequences of length 3000, and 10 validation sequences of length 3000, with a `burn-in' of 1000 to disregard the influence of the starting distribution. We use QETLAB (a MATLAB Toolbox developed by \cite{qetlab}) to generate random HQMMs. We apply our learning algorithm once to learn HQMMs from data and report the DA. We use the Baum-Welch algorithm implemented in the \texttt{hmmtrain} function from MATLAB's Statistics and Machine Learning Toolbox to learn HMM parameters. When training HMMs, we train 10 models and report the best DA.\\

We found that starting with a batch size of 1 with 5-6 iterations to get close to the local maximum, and then increasing the batch size to 3-4 and smaller $num\_iterations \sim 3$ was a good way to reach convergence. We also find that training models with $w > 1$ becomes very slow; when $w=1$, to compute the log-likelihood, we can simply take the product of all the Kraus operators corresponding to the observed sequence, and apply it on either side of the density matrix. However, with $w \geq 2$, we have to perform a sum over the $w$ Kraus operators corresponding to a given observation, before we can apply the next set of Kraus operators. \\

The first experiment compares learned models on data generated by a valid `probability clock' NOOM/HQMM model (\cite{noomreport}) that theoretically cannot be modeled by a finite-dimensional HMM. The second experiment considers data generated by the 2-state, 4-output HQMM proposed in \cite{monras2010hidden}, which requires at least 3 hidden states to be modeled with an HMM. The third experiment is performed on data generated by physically motivated, fully quantum 2-state, 6-output HQMM requiring at least 4 classical states for HMMs to model, and can be seen as an extension of the \cite{monras2010hidden} model. Finally, we compare the performance of our algorithm with EM for HMMs on data that was generated by a hand-written HMM. These experiments are meant to showcase the greater expressiveness of HQMMs compared with HMMs. While we see mixed performance on HMM-generated data, 
%
%The second experiment considers a 2-state HQMM presented by \cite{monras2010hidden}, and they show that such a model requires at least 3 states to be modeled by a classical HMM. 
%
%In our third experiment, we extend the \cite{monras2010hidden} model to a 2-state, 6-output HQMM which requires at least 4 classical states. For these three experiments, w
we are able to empirically demonstrate that on the HQMM-generated datasets, our algorithm is able to learn an HQMM that can better predict the generated data than EM for classical HMMs with fewer hidden states. 

%In our final experiment, we generate data using a hand-written 6-state HMM, and compare the performance of HQMMs with that of HMMs. \sid{I would like to communicate that the algorithm is quite slow for larger states especially when we use more than 1 operator per observable, and that's the reason we aren't able to train on large datasets, but is this something we should say/how?}

\subsection{Probability Clock}

%\cite{jaeger2000observable} describes a 3-hidden state, 2-observable OOM called the `probability clock' where the probability of generating an observable $a$ changes periodically with the length of the sequence of $a$s preceding it. Later, 
\cite{zhao2010norm} describes a 2-hidden state, 2-observable NOOM `probability clock,' where the probability of generating an observable $a$ changes periodically with the length of the sequence of $a$s preceding it, and cannot be modeled with a finite-dimensional HMM: %Since a NOOM can be modeled by a real-valued HQMM, HQMMs can be used to model probability clock data. %, we use the same model as them to generate synthetic data. 

{\small
\begin{equation}
\hat{K}_{1,1} = \begin{pmatrix} 0.6\cos(0.6) & -\sin(0.6) \\ 0.6\sin(0.6) & \cos(0.6)\end{pmatrix}  \,
\hat{K}_{1,2} = \begin{pmatrix} 0.8 & 0 \\ 0 & 0\end{pmatrix}\hspace{-2mm}
\end{equation}}
\normalsize

This is a valid HQMM since  $\sum_{y=1}^{y=2} K_{1,y}^\dagger K_{1,y} = \mathds{I}$. Observe that this HQMM has only 1 Kraus operator per observable, which means it models the state evolution as unitary. \\ % and is not ordinarily a constraint we apply on classical state evolution as we discussed in Section \ref{form_sect}.

Our results in Table \ref{prob-clock-table} demonstrate that a probability clock generates data that is hard for HMMs to model and that our iterative algorithm yields a simple HQMM that matches the predictive power of the original model.

\begin{table}[h]
\caption{ Performance of various HQMMs and HMMs learned from data generated by the probability clock model. HQMM parameters are given as $(n,s,w)$ and HMM parameters are given as $(n,s)$, where $n$ is the number of hidden states, $s$ is the number of observables, and $w$ is the number of Kraus operators per observable. (T) indicates the true model, (L) indicates learned models. P is the number of parameters. Both the mean and STD of the DA are indicated for training and test data.}\label{prob-clock-table}
  \centering
  \begin{tabular}{llll}
    \hline
    {\bf Model} & {\bf P} &
     {\bf Train DA} & {\bf Test DA} \\
      \hline
    $2,2,1-$HQMM (T) & 8 & $0.1642$ ($0.0089$) & $0.1632$ ($0.0111$)  \\
    $2,2,1-$HQMM (L) & 8 & $0.1640$ ($0.0088$) & {${0.1631}$ ($0.0111$)} \\
    %$2,2,2-$HQMM (L) & 16 &$0.1640$ ($0.0088$) & $0.1632$ ($0.0108$) \\
    \hline
    $2,2-$HMM (L) & 8 & $0.0851$ ($0.0074$) & $0.0833$ ($0.0131)$ \\
    $4,2-$HMM (L) & 24 & $0.1459$ ($0.0068$) & $0.1446$ ($0.0100$) \\
    $8,2-$HMM (L) & 80 & $0.1639$ ($0.0087$) & $0.1630$ ($0.0108$) \\
    \hline
  \end{tabular}
\end{table}

\subsection{\cite{monras2010hidden} 2-state HQMM}

\cite{monras2010hidden} present a 4-state, 4-output HMM with a loose lower bound requirement of 3 classical latent states that can be modeled by the following 2-state, 4-output HQMM:
\begin{alignat}{3}
\hat{K}_{1,1} &= \begin{pmatrix} \frac1{\sqrt{2}} & 0 \\ 0 & 0\end{pmatrix} \hspace{1cm}
&&\hat{K}_{1,2} = \begin{pmatrix} 0 & 0 \\ 0 & \frac1{\sqrt{2}}\end{pmatrix} \\
\hat{K}_{1,3} &= \begin{pmatrix} \frac1{2\sqrt{2}} & \frac1{2\sqrt{2}} \\ \frac1{2\sqrt{2}} & \frac1{2\sqrt{2}}\end{pmatrix} \hspace{1cm}
&&\hat{K}_{1,4} = \begin{pmatrix} \frac1{2\sqrt{2}} & -\frac1{2\sqrt{2}} \\ -\frac1{2\sqrt{2}} & \frac1{2\sqrt{2}}\end{pmatrix}
\end{alignat}

This model also treats state evolution as unitary since there is only 1 Kraus operator per observable. We generate data using this model, and our results in Table \ref{monras-table} show that our algorithm is capable of learning an HQMM that can match the DA of the original model, while the HMM needs more states to match the DA.

\begin{table}[h]
\caption{Performance of various HQMMs and HMMs on data generated by the \cite{monras2010hidden} model. HQMM parameters are given as $(n,s,w)$ and HMM parameters are given as $(n,s)$, where $n$ is the number of hidden states, $s$ is the number of observables, and $w$ is the number of Kraus operators per observable}\label{monras-table}
  \centering
  \begin{tabular}{llll}
    \toprule
     {\bf Model} & {\bf P} & 
      {\bf Train DA} & {\bf Test DA} \\
      \midrule
    $2,4,1-$HQMM (T) & 16 & $0.2505$ ($0.0037$) & $0.2516$ ($0.0063$)  \\
    $2,4,1-$HQMM (L) & 16 & $0.2501$ ($0.0085$) & $0.2512$ ($0.0064$) \\
    $2,4,2-$HQMM (L) & 32 &$0.2499$ ($0.0035$) & $0.2508$ ($0.0060$) \\
    \midrule
    $2,4-$HMM (L) & 12 & $0.0960$ ($0.0085$) & $0.0963$ ($0.0064)$ \\
    $3,4-$HMM (L) & 21 & $0.1387$ ($0.0067$) & $0.1416$ ($0.0070$) \\
    $4,4-$HMM (L) & 32 & $0.2504$ ($0.0037$) & $0.2515$ ($0.0062$) \\
    \bottomrule
  \end{tabular}
\end{table}

\subsection{A Fully Quantum HQMM}

In the previous two experiments, the HQMMs we used to generate data were also valid NOOMs since they used only real-valued entries. 
Here, we present the results of our algorithm on a fully quantum HQMM. Since we use complex-valued entries, there is no known way of writing our model as an equivalent-sized HMM, NOOM, or OOM.\\ %\boots{well, we can, but the equivalent HMM has more parameters}

We motivate this model with a physical system. Consider electron spin: quantized angular momentum that can either be `up' or `down' along whichever spatial axis the measurement is made, but not in between. There is no well-defined 3D vector describing electron spin along the 3 spatial dimensions, only `up' or `down' along a chosen axis of measurement (i.e., measurement basis). This is unlike classical angular momentum which can be represented by a vector with well-defined components in three spatial dimensions. Picking an arbitrary direction as the $z$-axis, we can write the electron's spin state in the $\{ +{\bf z}, - {\bf z}\}$ basis so that $\begin{bmatrix} 1 & 0 \end{bmatrix}^T$ is $|+{\bf z}\rangle$ and $\begin{bmatrix} 0 & 1 \end{bmatrix}^T$ is $|-{\bf z}\rangle$. But electron spin constitutes a two-state quantum system, so it can be in superpositions of the orthogonal `up' and `down' quantum states, which can be parameterized with $(\theta,\phi)$ and written as $|\psi\rangle = \cos\left(\frac{\theta}2\right)|+{\bf z}\rangle + e^{i\phi}\sin\left(\frac{\theta}2\right)|-{\bf z}\rangle$, where $0 \leq \theta \leq \pi$ and $0 \leq \phi \leq 2\pi$. The Bloch sphere (sphere with radius 1) is a useful tool to visualize qubits since it can map any two-state system to a point on the surface of the sphere using $(\theta, \phi)$ as polar and azimuthal angles. We could also have chosen $\{ +{\bf x}, - {\bf x}\}$ or $\{ +{\bf y}, - {\bf y}\}$, which can be written in our original basis:

\begin{alignat}{3}
|+{\bf x}\rangle &= \frac1{\sqrt{2}}|+{\bf z}\rangle + \frac1{\sqrt{2}}|-{\bf z}\rangle \hspace{1cm}&&\left(\theta=\frac{\pi}2, \phi = 0\right)\\
|-{\bf x}\rangle &= \frac1{\sqrt{2}}|+{\bf z}\rangle - \frac1{\sqrt{2}}|-{\bf z}\rangle \hspace{1cm}&&\left(\theta=\frac{\pi}2, \phi = \pi\right)\\
|+{\bf y}\rangle &= \frac1{\sqrt{2}}|+{\bf z}\rangle + \frac{i}{\sqrt{2}}|-{\bf z}\rangle \hspace{1cm}&&\left(\theta=\frac{\pi}2, \phi = \frac{\pi}2\right)\\
|-{\bf y}\rangle &= \frac1{\sqrt{2}}|+{\bf z}\rangle - \frac{i}{\sqrt{2}}|-{\bf z}\rangle \hspace{1cm}&&\left(\theta=\frac{\pi}2, \phi = \frac{3\pi}2\right)
\end{alignat}

Now consider the following process, inspired by the Stern-Gerlach experiment (\cite{gerlach1922experimentelle}) from quantum mechanics. %\sid{ok this citation is of a paper published in German in 1922 so I have no idea what it actually says, translations are not easy to find. Just want to make sure that's ok. I pulled the reference from wikipedia}. 
We begin with an electron whose spin we represent in the $\{+ {\bf z}, - {\bf z}\}$ basis. At each time step, we pick one of the $x$, $y$, or $z$ directions uniformly and at random, and apply an inhomogeneous magnetic field along that axis. This is an act of measurement that collapses the electron spin to either `up' or `down' along that axis, which will deflect the electron in that direction. Let us use the following encoding scheme for the results of the measurement: $1$: $+ {\bf z}$, $2$: $- {\bf z}$, $3$: $+ {\bf x}$, $4$: $- {\bf x}$, $5$: $+ {\bf y}$, $6$: $- {\bf y}$. Consequently, at each time step, the observation tells us which axis we measured along, and whether the spin of the particle is now `up' or `down' along that axis. As an example, if we prepare an electron spin `up' along the $z$-axis, and observe the following sequence: $1,3,2,6$, it means that we applied the inhomogeneous magnetic field in the $z$-direction, then $x$-direction, then $z$-direction, and finally the $y$-direction, causing the electron spin state to evolve as $+{\bf z}, +{\bf x}, -{\bf z}, -{\bf y}$. \\

Note that transitions $1 \leftrightarrow 2$, $3 \leftrightarrow 4$, and $5 \leftrightarrow 6$ are not allowed, since there are no spin-flip operations in our process. Admittedly, this is a slightly contrived example, since normally we think of a hidden state that evolves according to some rules, producing noisy observation. Here, we select the observation (down to the pair, $(1,2)$, $(3,4)$, $(5,6)$) that we wish to observe, and that tells us how the `hidden state' evolves as described by a chosen basis. \\

This model is related to the 2-state HQMM requiring 3 classical states described in \cite{monras2010hidden}. It is still a 2-state system, but we add two new Kraus operators with complex entries and renormalize:

\begin{alignat}{3}
\hat{K}_{1,1} &= \begin{pmatrix} \frac1{\sqrt{3}} & 0 \\ 0 & 0\end{pmatrix} \hspace{1cm}
&&\hat{K}_{1,2} = \begin{pmatrix} 0 & 0 \\ 0 & \frac1{\sqrt{3}}\end{pmatrix} \\
\hat{K}_{1,3} &= \begin{pmatrix} \frac1{2\sqrt{3}} & \frac1{2\sqrt{3}} \\ \frac1{2\sqrt{3}} & \frac1{2\sqrt{3}}\end{pmatrix} \hspace{1cm}
&&\hat{K}_{1,4} = \begin{pmatrix} \frac1{2\sqrt{3}} & -\frac1{2\sqrt{3}} \\ -\frac1{2\sqrt{3}} & \frac1{2\sqrt{3}}\end{pmatrix}\\
\hat{K}_{1,5} &= \begin{pmatrix} \frac1{2\sqrt{3}} & -\frac{i}{2\sqrt{3}} \\ \frac{i}{2\sqrt{3}} & \frac1{2\sqrt{3}}\end{pmatrix} \hspace{1cm}
&&\hat{K}_{1,6} = \begin{pmatrix} \frac1{2\sqrt{3}} & \frac{i}{2\sqrt{3}} \\ -\frac{i}{2\sqrt{3}} & \frac1{2\sqrt{3}}\end{pmatrix}
\end{alignat}

Physically, Kraus operators $\hat{K}_{1,1}$ and $\hat{K}_{1,2}$ keep the spin along the $z$-axis, Kraus operators $\hat{K}_{1,3}$ and $\hat{K}_{1,4}$ rotate the spin to lie along the $x$-axis, while Kraus operators $\hat{K}_{1,5}$ and $\hat{K}_{1,6}$ rotate the spin to lie along the $y$-axis. Following the approach of \cite {monras2010hidden}, we write down an equivalent 6-state HMM, and compute the rank of a Hankel matrix with the statistics of this process, yielding a requirement of 4 classical states as a weak lower bound. \\%\sid{do I need to go into detail here? It could get messy...Monras says `it's easy to show how we their 4-state HMM becomes 2-state HQMM' and I can totally see it, but I can't explain why rigourously. I sort of do a similar patter matching to take this HQMM to a a 6-state HMM and compute the matrix like they do, and find the lower bound...}

We present the results of our learning algorithm applied to data generated by this model in Table \ref{full-quant-table}. We find that our algorithm can learn a 2-state HQMM (same size as the model that generated the data) with predictive power matched only by a 6-state HMM.

\begin{table}[h]
\caption{Performance of various HQMMs and HMMs on the fully quantum HQMM. HQMM parameters are given as $(n,s,w)$ and HMM parameters are given as $(n,s)$, where $n$ is the number of hidden states, $s$ is the number of observables, and $w$ is the number of Kraus operators per observable}\label{full-quant-table} % \sid{For previous examples, I included HQMMs with more than 1 Kraus operator per observable just to show that they can be trained in general, but these are hard to train so I've left them out here. Let me know if you think I should put them back in.} 
  \centering
  \begin{tabular}{llll}
    \hline
    {\bf Model} & {\bf P} &
     {\bf Train DA} & {\bf Test DA} \\
      \hline
    $2,6,1-$HMM (T) & 24 & $0.1303$ ($0.0042$) & $0.1303$ ($0.0047$)  \\
    $2,6,1-$HQMM (L) & 24 & $0.1303$ ($0.0042$) & $0.1301$ ($0.0047$) \\
    \hline
    $2,6-$HMM (L) & 16 & $0.0327$ ($0.0038$) & $0.0328$ ($0.0033$)  \\
    $3,6-$HMM (L) & 27 & $0.0522$ ($0.0043$) & $0.0530$ ($0.0040$) \\
    $4,6-$HMM (L) & 40  & $0.0812$ ($0.0042$) & $0.0822$ ($0.0045$) \\
    $5,6-$HMM (L) & 55  & $0.0967$ ($0.0042$) & $0.0967$ ($0.0045$) \\
    $6,6-$HMM (L) & 72 & $0.1305$ ($0.0042$) & $0.1301$ ($0.0049$)  \\
    \hline
  \end{tabular}
\end{table}

%Here, we wish to add that if we constrain the learning algorithm to solving for real-valued Kraus operators, i.e., discard the phase terms and only solve for the $\theta$ in Equation \ref{eq1} that maximises the log-likelihood, we are essentially learning valid NOOMs (the resulting algorithm is almost identical to the one proposed by \cite{noomreport} with some minor differences). The resulting algorithm is able to match the performance of our HQMM learning algorithm (i.e., allowing for complex-valued parameters) on the probability clock and \cite{monras2010hidden} model, albeit with some difficulty as we have to restart multiple times since the algorithm gets stuck in local minima really easily. After all, the models that generated the data were themselves valid NOOMs. However, running this algorithm on this fully quantum model, we find it does not do much better than random (i.e., DA $\sim$ 0); the algorithm gets trapped in local minima. \sid{I don't know if I feel comfortable making strong claim about the NOOM performance, if any claims at all. Yes, it works on probability clock and the Monras model, but it doesn't work on the fully quantum model even when I give it 6 states. I can't help but feel that restricting the algorithm to real values somehow guts it in an unintended way...} \boots{I think you can say this: i.e. of course the algorithm can't do well due to this restriction}

\subsection{Synthetic Data from a hand-written HMM}

We have shown that we can generate data using HQMMs that classical HMMs with the same number of hidden states struggle to model. In this section, we explore how well HQMMs can model data generated by a classical HMM. In general, randomly generated HMMs generate data that is hard to predict (i.e., DA closer to 0), so we hand-author an arbitrary, well-behaved HMM with full-rank transition matrix ${\bf A}$ and full-rank emission matrix ${\bf C}$ to compare HQMM learning with EM for HMMs:

\begin{equation}
{\bf A } = \begin{bmatrix} 0.8 & 0.01 & 0 & 0.1 & 0.3 & 0\\ 0.02 & 0.02 & 0.1 & 0.15 & 0.05 & 0\\ 0.08 & 0.03 & 0.1 & 0.4 & 0.05 & 0.5\\ 0.05 & 0.04 & 0.5 & 0.35 & 0 & 0.5 \\ 0.03 & 0.5 & 0.03 & 0 & 0.6 & 0 \\ 0.02 & 0.4 & 0.27 & 0 & 0 & 0\end{bmatrix}, \,\,\,\,\,\,\,\,\,
{\bf C } = \begin{bmatrix} 0.2 & 0 & 0.05 & 0.95 & 0.01 & 0.05 \\ 0.7 & 0.1 & 0.05 & 0.01 & 0.05 & 0.05 \\ 0.05 & 0.8 & 0.1 & 0.02 & 0.05 & 0.04 \\ 0.04 & 0.04 & 0.02 & 0 & 0.84 & 0.11 \\ 0.01 & 0.03 & 0.7 & 0.01 & 0.02 & 0.2 \\ 0 & 0.03 & 0.08 & 0.01 & 0.03 & 0.55 \end{bmatrix}
\end{equation}

Our results are presented in Table \ref{synth-table}. We find that small HQMMs outperform HMMs with the same number of hidden states, although the parameter count ends up being larger. However, as model size increases, training becomes quite slow, and our HQMMs are over-parameterized, becoming prone to local optima, and EM for HMMs may work better in practice on HMM-generated data. Interestingly, even though our scheme in Section \ref{qcirc-section} requires $w=n$ to simulate HMMs with HQMMs, empirically, we find that we are able to learn reasonable models with $w < n$.
 
\begin{table}[h]
\caption{Performance of various HQMMs and HMMs on synthetic data generated by an HMM. HQMM parameters are given as $(n,s,w)$ and HMM parameters are given as $(n,s)$, where $n$ is the number of hidden states, $s$ is the number of observables, and $w$ is the number of Kraus operators per observable}\label{synth-table}
  \centering
  \begin{tabular}{llll}
    \toprule
   {\bf Model} & {\bf P} & 
      {\bf Train DA} & {\bf Test DA} \\
      \midrule
    $6,6-$HMM (T) & 72 & $0.1838$ ($0.0095$) & $0.1903$ ($0.0071$)  \\
    $2,6,1-$HQMM (L) & 24 & $0.1597$ ($0.0088$) & $0.1659$ ($0.0073$) \\
    $3,6,1-$HQMM (L) & 54 & $0.1655$ ($0.0101$) & $0.1715$ ($0.0085$)\\
    $4,6,1-$HQMM (L) & 96 & $0.1732$ ($0.0103$) & $0.1772$ ($0.0103$)  \\
    $5,6,1-$HQMM (L) & 150 & $0.1680$ ($0.0093$) & $0.1706$ ($0.0084$)  \\
    $5,6,2-$HQMM (L) & 300 & $0.1817$ ($0.0096$) & $0.1863$  ($0.0069$) \\
    $5,6,3-$HQMM (L) & 450 & $0.1817$ ($0.0093$) & $0.1866$ ($0.0064$) \\
    $5,6,5-$HQMM (L) & 750 & $0.1821$ ($0.0095$) & $0.1877$ ($0.0060$)  \\
    $6,6,1-$HQMM (L) & 216 & $0.1713$ ($0.0113$) & $0.1708$ ($0.0079$) \\
    $6,6,2-$HQMM (L) & 432 & $0.1817$ ($0.0096$) & $0.1870$ ($0.0070$)\\
%     $6,6,4-$HQMM (learned) & 864 & \\
%     $6,6,6-$HQMM (learned) & 1296 & \\
    \midrule
    $2,6-$HMM (L) & 16 & $0.1282$ ($0.0074$) & $0.1314$ ($0.0062$)  \\
    $3,6-$HMM (L) & 27 & $0.1555$ ($0.0097$) & $0.1625$ ($0.0073$) \\
    $4,6-$HMM (L) & 40  & $0.1667$ ($0.0099$) & $0.1732$ ($0.0068$) \\
    $5,6-$HMM (L) & 55  & $0.1751$ ($0.0097$ & $0.1816$ ($0.0070$) \\
    $6,6-$HMM (L) & 72 & $0.1841$ ($0.0095$) & $0.1901$ ($0.0070$)  \\
    \bottomrule
  \end{tabular}
\end{table}

% \subsection{Discussion}

%\sid{Also, is there a better way to display this data, instead of the format that I've used above? Would a graph help?}
 %, abd consequently these models are harder to train.

%To check that our learned model wasn't entirely down to luck, we generated a 100 random models and computed their mean DA and mean standard deviation on the test set. We also generated 100 spurious test sets, by generating random models and having them generate a test set with 20 sequences of length 3000. We computed the mean DA and mean standard deviation for these cases as well. This data is presented in Table \ref{sample-table2} \sid{(I can fetch this data if necessary, but not sure if it adds value. If you think so, I'll whip up a table with this.)}.

% \subsection{Real Data} \boots{This will be critical, need results here ASAP.}

% \sid{Preliminary results on splice dataset aren't looking that good. HQMMs are about 5-10 percentage points worse than HMMs...should I continue on this dataset or look for another dataset?}
% We also train our model on the Splice dataset (CITATION NEEDED). The task is:

% Once again, we run use MATLAB's built in \texttt{hmmtrain} to train a HMM, and we use our algorithm to train a HQMM. We use 10-fold cross validation and obtained the following results:

% \begin{table}[h]
% \caption{Error rates}
% \label{sample-table2}
%   \centering
%   \begin{tabular}{lllll}
%     \toprule
%       {Label}  & {HMM} & {HQMM} \\
%       \midrule
%     EI & 0.3975 &  \\
%     IE & 0.3565 &  \\
%     N & 0.5522 &  \\
%     \bottomrule
%   \end{tabular}
% \end{table}

\section{Conclusion}% AND FURTHER WORK}

We formulated and parameterized hidden quantum Markov models by first finding quantum circuits to implement HMMs, reducing them to their Kraus operator representation, and then relaxing some constraints. We showed how quantum analogues of classical conditioning and marginalization can be implemented, and indeed these methods are general enough to allow us to construct quantum versions of any probabilistic graphical model. We also proposed an iterative maximum-likelihood algorithm to learn the Kraus operators for HQMMs. We demonstrated that our algorithm could successfully learn HQMMs that were shown to (theoretically) better model certain sequences in the literature. While our HQMMs cannot model data any better than a sufficiently large HMM, we find that HQMMs can often better model the same data with fewer hidden states. % The actual number of parameters depends on whether we are learning the canonical form of HMMs or the observable operator. 
%
%There remain a few unanswered questions: we have demonstrated that HQMMs can represent phenomena more compactly under certain circumstances in practice, but it unknown whether this is always possible in theory. And even if HQMMs can model processes more compactly than HMMs, how does their representational capacity compare to OOMs? 
Future work could look at optimizing our algorithm to scale on larger datasets, and at the performance of HQMMs in areas like natural language processing or finance, or quantum versions of existing graphical models. % such as quantum LDA. 
We speculate that quantum models could lead to improvements in these areas where `quantum' effects may be able to better simulate the dynamic processes.

\subsubsection*{Acknowledgements}
We would like to thank Theresa W. Lynn at Harvey Mudd College for her inputs and feedback on this work.
%Use unnumbered third level headings for the acknowledgements.  All
%acknowledgements go at the end of the paper.

\small
\bibliographystyle{plainnat}
\bibliography{bibliography}

\setcounter{section}{1}
\renewcommand\thesection{\Alph{section}}
\renewcommand\thesection{\Alph{subsection}}
\clearpage
\section*{APPENDIX}

\subsection{Tensor Product and Partial Trace as Matrix Operations}
Here we go into more depth on how we construct matrices $W$, $V_y$ and $V_w$ to perform the tensor product and partial trace operations for use in our Algorithm \ref{alg:hqmm}.

\subsubsection{Tensor Product}
We construct a matrix $W$ that performs tensor product with an $s \times s$ density matrix $\hat{\rho}_B$ with all zeros, except $\hat{\rho}_{1,1} = 1$, i.e., $\hat{\rho}_B = \begin{pmatrix} 1 & 0 & \ldots & 0 \\ 0 & 0 & \ldots & 0 \\\vdots & \vdots & \ddots & 0 \\ 0 & 0 & 0 & 0\end{pmatrix}_{s \times s}$. \\

Observe that for an $n \times n$ density matrix $\hat{\rho}_A$, we the tensor product yields an $ns \times ns$ matrix $\hat{\rho}_{AB} = \hat{\rho}_A \otimes \hat{\rho}_B$. Thus, our matrix $W$ will be an $ns \times n$ matrix, such that $\hat{\rho}_A \otimes \hat{\rho}_B = W\hat{\rho}_AW^\dagger$. \\

To construct $W$, take $n$ of $s \times n$ matrices of zeros, for the $i$th among those $n$ matrices, place `$1$' on the first row and $i$th column. Then stack all of those matrices vertically to obtain the $ns \times n$ matrix $W$.\\

\underline{Example}
If we have a $3\times 3$ density matrix we wished to tensor with a $4 \times 4$ density matrix, we construct $W$ such that:
\begin{equation}
W = \begin{bmatrix} 1 & 0 & 0 \\ 0 &  0 & 0 \\ 0 &  0 & 0 \\ 0 &  0 & 0 \\ 0 & 1 & 0 \\ 0 & 0 & 0 \\ 0 & 0 & 0 \\ 0 & 0 & 0 \\ 0 & 0 & 1 \\ 0 & 0 & 0 \\ 0 & 0 & 0 \\ 0 & 0 & 0 \end{bmatrix}_{12 \times 3}
\end{equation}
Then, we find that:
\begin{equation}
\hat{\rho}_A \otimes \begin{bmatrix} 1 & 0 & 0 & 0 \\ 0 & 0 & 0 & 0 \\ 0 & 0 & 0 & 0 \\ 0 & 0 & 0 & 0\end{bmatrix} = W\hat{\rho}_AW^\dagger
\end{equation}

\subsubsection{Partial Trace}
The partial trace cannot ordinarily be implemented with a single matrix operation. However, if a projection operator has just been applied, this operation becomes trivial and easy to perform with a matrix multiplication, i.e., $\text{tr}_B \left(\hat{P}_y \hat{\rho}_{AB} \hat{P}_y^\dagger \right) = V_y\hat{P}_y \hat{\rho}_{AB}\hat{P}_y^\dagger V_y^\dagger$. On the other hand, if we wish to take the partial trace without applying a projection operator, i.e., without a measurement of one of the two subsystems, we must take a sum over these matrices like so: $\text{tr}_A \left(\hat{\rho}_{AB} \right) = \sum_w V_w \hat{\rho}_{AB} V_w^\dagger$. The subscript of `$\text{tr}$' tells us which particle we are tracing over. \\

\underline{Partial Trace after Projection}
Here, we will assume that a projection operator $\hat{P}_y$ corresponding to an observation on the second particle in the same basis was applied on the joint state of a system prior to the partial trace. If this is not the case, we simply construct all matrices $V_y$ for each observation and take a sum as previously described.\\

The construction of this matrix $V_y$ is straightforward. We take $s$ of $n \times s$ matrices of zeros, and for the $i$th of these $s$ matrices, place `$1$' on the $y$th column and $i$th row. Then, concatenate these matrices horizontally to obtain $V_y$. \\

\underline{Example}
If we have a $12\times 12$ density matrix describing the joint state of a $3$-state particle and $4$-state particle, we can construct $V_2$ to trace over the second particle after applying a projection operator $\hat{P}_2$ to be:
\setcounter{MaxMatrixCols}{20}
\begin{equation}
V_2 = \begin{bmatrix} 0 & 1 & 0 & 0 & 0 & 0 & 0 & 0 & 0 & 0 & 0 & 0 \\ 0 & 0 & 0 & 0 & 0 & 1 & 0 & 0 & 0 & 0 & 0 & 0  \\ 0 & 0 & 0 & 0 & 0 & 0 & 0 & 0 & 0 & 1 & 0 & 0 \end{bmatrix}_{3 \times 12}
\end{equation}

Then, we find that if we have applied a projection operator:
\begin{equation} 
\text{tr}_B\left( \hat{P}_2 \hat{\rho}_{AB} \hat{P}_2^\dagger \right) = V_2\hat{P}_2 \hat{\rho}_{AB} \hat{P}_2^\dagger V_2^\dagger
\end{equation}

\underline{Partial Trace without Projection}
Here, we assume that no measurement/projection has been made, since this is how we use it in the algorithm. If this is not the case and there a projection operator was applied, forgo the sum and simply apply the $V_w$ corresponding to the measurement. \\

To perform partial trace where there has been no observation, we must construct a set of matrices $V_w$, which we apply and then sum over. The construction of each matrix $V_w$ is as follows. We take $s$ of $s \times n$ matrices of zeros, except the $w$th out these $s$ matrices which is an identity matrix. Then concatenate these matrices horizontally to obtain $V_w$. \\

\underline{Example}
If we have a $12\times 12$ density matrix describing the joint state of a $3$-state particle and $4$-state particle, we can construct $V_w$ to trace over the first particle as:
\setcounter{MaxMatrixCols}{20}
\begin{equation}
\begin{split}
V_1 &= \begin{bmatrix} 1 & 0 & 0 & 0 & 0 & 0 & 0 & 0 & 0 & 0 & 0 & 0 \\ 0 & 1 & 0 & 0 & 0 & 0 & 0 & 0 & 0 & 0 & 0 & 0  \\ 0 & 0 & 1 & 0 & 0 & 0 & 0 & 0 & 0 & 0 & 0 & 0 \\ 0 & 0 & 0 & 1 & 0 & 0 & 0 & 0 & 0 & 0 & 0 & 0 \end{bmatrix}_{4 \times 12} \\
V_2 &= \begin{bmatrix} 0 & 0 & 0 & 0 & 1 & 0 & 0 & 0 & 0 & 0 & 0 & 0 \\ 0 & 0 & 0 & 0 & 0 & 1 & 0 & 0 & 0 & 0 & 0 & 0  \\ 0 & 0 & 0 & 0 & 0 & 0 & 1 & 0 & 0 & 0 & 0 & 0 \\ 0 & 0 & 0 & 0 & 0 & 0 & 0 & 1 & 0 & 0 & 0 & 0 \end{bmatrix}_{4 \times 12} \\
V_3 &= \begin{bmatrix} 0 & 0 & 0 & 0 & 0 & 0 & 0 & 0 & 1 & 0 & 0 & 0 \\ 0 & 0 & 0 & 0 & 0 & 0 & 0 & 0 & 0 & 1 & 0 & 0  \\ 0 & 0 & 0 & 0 & 0 & 0 & 0 & 0 & 0 & 0 & 1 & 0 \\ 0 & 0 & 0 & 0 & 0 & 0 & 0 & 0 & 0 & 0 & 0 & 1 \end{bmatrix}_{4 \times 12}
\end{split}
\end{equation}

Then, we find that:
\begin{equation} 
\hat{\rho}_B = \text{tr}_A\left( \hat{\rho}_{AB} \right) = \sum_{w=1}^3 V_w \hat{\rho}_{AB} V_w^\dagger
\end{equation}

\subsection{Factorizing Unitary Matrices into {\bf H} Matrices} The proof of this theorem is a generalization of the proof found in \cite{noomreport}.

\begin{lemma} \emph{For any vector $\vec{x} \in \mathds{C}^n$ where $n \geq 2$, there exists a matrix ${\bf A}$ that is a product of $H$ matrices, such that ${\bf A}\vec{x} = \|\vec{x}\|\vec{e}_1$ where $\vec{e}_1 = \begin{bmatrix} 1 & 0 & \ldots & 0\end{bmatrix}_{1\times n}^T$ (unit vector in $\mathds{R}^n$).}\end{lemma}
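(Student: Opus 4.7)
The plan is to mimic the standard Givens-rotation proof for reducing a vector to a multiple of $\vec{e}_1$, adapted to the complex-parameterized $\mathbf{H}$ matrices. I would iteratively apply matrices of the form $\mathbf{H}(1, j, \theta, \phi, \psi, \delta)$ for $j = n, n-1, \ldots, 2$, using each one to zero out coordinate $x_j$ while accumulating its magnitude into the first coordinate. Because $\mathbf{H}(1,j,\cdot)$ acts nontrivially only on entries in rows $1$ and $j$, zeros produced at later coordinates survive untouched by subsequent steps, so the product $\mathbf{A}$ of these $n-1$ factors is the desired matrix.

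The crux is a two-entry reduction lemma: given complex $x_i, x_j$ with $i < j$, I want parameters so that the update rule of Equation \ref{eq1} sends $(x_i, x_j)$ to $\bigl(\sqrt{|x_i|^2+|x_j|^2},\, 0\bigr)$ with the first component real and non-negative. Writing $x_i = r_i e^{i\alpha_i}$, $x_j = r_j e^{i\alpha_j}$, the zero condition
\begin{equation*}
-e^{-i\delta}\sin\theta\,x_i + e^{-i\psi}\cos\theta\,x_j = 0
\end{equation*}
separates into a magnitude constraint $\tan\theta = r_j/r_i$ and a phase constraint $\psi - \delta = \alpha_j - \alpha_i$. Taking $\delta = 0$ and $\psi = \alpha_j - \alpha_i$, a short computation shows the new $x_i$ equals $e^{i\phi/2}e^{i\alpha_j}\sqrt{r_i^2+r_j^2}$; setting $\phi = -2\alpha_j$ makes it real, non-negative, and equal to $\sqrt{|x_i|^2+|x_j|^2}$.

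Applying this sub-lemma with $(i,j) = (1,n),(1,n-1),\ldots,(1,2)$, an induction on the step count shows that after $k$ steps the last $k$ entries vanish and the first entry equals the Euclidean norm of the entries processed so far. Because this first entry is always real and non-negative, each subsequent invocation of the sub-lemma is in the regime $\alpha_1 = 0$ and proceeds cleanly; after $n-1$ steps the vector equals $\|\vec{x}\|\vec{e}_1$. The main obstacle is the degenerate case $r_i = 0$, where $\tan\theta = r_j/r_i$ is ill-defined: if $r_j \neq 0$ one instead takes $\theta = \pi/2$ and picks $\psi,\delta,\phi$ so that $\mathbf{H}$ swaps $x_j$ into position $i$ as the non-negative real $|x_j|$, while if both $r_i = r_j = 0$ the identity-like choice $\theta = 0$ suffices. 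Verifying the ``first entry remains real and non-negative'' invariant across iterations is immediate from the choice $\phi = -2\alpha_j$ at each step, which closes the induction.
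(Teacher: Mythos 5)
Your proposal is correct and follows essentially the same route as the paper: a sequence of $\mathbf{H}(1,j,\cdot)$ Givens-type eliminations, each zeroing one entry against the first coordinate so that the accumulated product sends $\vec{x}$ to $\|\vec{x}\|\vec{e}_1$ (the paper sweeps $j=2,\ldots,n$ rather than $j=n,\ldots,2$, which is immaterial). If anything, your version is the more careful one: by solving explicitly for $\psi$, $\delta$, $\phi$ you handle entries with arbitrary, unequal phases and the degenerate $r_i=0$ case, whereas the paper's parameterization $x_1=y_2e^{i\beta_2}\cos\alpha_2$, $x_2=y_2e^{i\beta_2}\sin\alpha_2$ with $\psi=\delta=0$ tacitly assumes the two entries share a common phase.
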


\begin{proof} Consider an arbitrary vector $\vec{x} \in \mathds{C}^n$, written as $\vec{x} = \begin{bmatrix} x_1 & x_2 & \ldots & x_n\end{bmatrix}_{1\times n}^T$. Let us define $y_2 = \sqrt{\|x_1\|^2 + \|x_2\|^2}$ and parameterize the entries $x_1$ and $x_2$ in $\vec{x}$ with $\alpha_2$ and $\beta_2$ so as to write:
\begin{equation}
\begin{split}
x_1 = y_2e^{i\beta_2}\cos(\alpha_2)\\
x_2 = y_2e^{i\beta_2}\sin(\alpha_2)
\end{split}
\end{equation}
Now consider the action of ${\bf H}_1(1,2,\alpha_2,-2\beta_2,0,0)$ on $\vec{x}$:
\begin{equation}
{\bf H}_1\vec{x} =\begin{bmatrix}  e^{-i\beta_2}\cos\alpha_2 & e^{-i\beta_2}\sin\alpha_2 & 0 & \cdots & 0  \\ -e^{-i\beta_2}\sin\alpha_2 & e^{-i\beta_2}\cos\alpha_2  & \vdots & \vdots  &  \vdots  \\ 0 & 0 & 1 & \cdots & 0  \\ \vdots &  & \vdots & \ddots  &  \vdots \\ 0 & 0 & 0 & \cdots & 1  \end{bmatrix}\begin{bmatrix} y_2e^{i\beta_2}\cos(\alpha_2) \\ y_2e^{i\beta_2}\sin(\alpha_2) \\ x_3 \\ \vdots\\ x_n\end{bmatrix} = \begin{bmatrix} y_2 \\ 0 \\ x_3 \\ \vdots\\ x_n\end{bmatrix}
\end{equation}

Next, we can define $y_3 = \sqrt{\|y_2\|^2 + \|x_3\|^2}$ and parameterize $y_2$ and $x_3$ using $\alpha_3$ and $\beta_3$, just like we previously. We can then apply ${\bf H}_2(1,3,\alpha_3,-2\beta_3,0,0)$, and we find that:
\begin{equation}
{\bf H_2}{\bf H_1}\vec{x} = \begin{bmatrix} y_3 \\ 0 \\ 0 \\ x_4 \\ \vdots\\ x_n\end{bmatrix}
\end{equation}
Following this pattern, we can construct a sequence of ${\bf H}$ matrices such that ${\bf H}_{n-1}\ldots{\bf H}_2{\bf H}_1\vec{x} = \begin{bmatrix} y_n & 0 & 0 & \ldots & 0\end{bmatrix}^T$. Observe that $y_n = \sqrt{\|y_{n-1}\|^2 + \|x_n\|^2} = \sqrt{\|y_{n-2}\|^2 + \|x_{n-1}\|^2 +  \|x_n\|^2} = \sqrt{\|x_1\|^2 + \ldots \|x_n\|^2} = \|\vec{x}\|$. Thus, with ${\bf A} = {\bf H}_{n-1}\ldots{\bf H}_2{\bf H}_1$, we have shown that there exists a matrix ${\bf A}$ that is a product of $H$ matrices, such that ${\bf A}\vec{x} = \|\vec{x}\|\vec{e}_1$.\end{proof}

\begin{lemma} \emph{Any 2x2 unitary matrix ${\bf A}$ can be written as ${\bf H}(1,2,\theta,\phi,\psi,\delta)$.}\end{lemma}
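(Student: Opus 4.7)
The plan is to match a general $2\times 2$ unitary against the $\mathbf{H}$ template by first peeling off a global phase and then using the standard $SU(2)$ parameterization.

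First, I would compute $\det\mathbf{H}(1,2,\theta,\phi,\psi,\delta)$. Expanding the $2\times 2$ determinant, the cross terms combine to
\[
e^{i\phi/2}e^{i\psi}\cos\theta\cdot e^{i\phi/2}e^{-i\psi}\cos\theta - e^{i\phi/2}e^{i\delta}\sin\theta\cdot(-e^{i\phi/2}e^{-i\delta}\sin\theta) = e^{i\phi}.
\]
So given any $2\times 2$ unitary $\mathbf{A}$, since $|\det\mathbf{A}|=1$, I can define $\phi\in[0,4\pi)$ by $\det\mathbf{A}=e^{i\phi}$; this fixes the first parameter.

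Next, consider $\mathbf{A}' := e^{-i\phi/2}\mathbf{A}$, which is unitary with determinant $1$, i.e. in $SU(2)$. A standard fact (which I would verify in one line from unitarity of the rows together with $\det\mathbf{A}'=1$) is that every element of $SU(2)$ has the form $\begin{pmatrix} a & b \\ -\bar b & \bar a\end{pmatrix}$ with $|a|^2+|b|^2=1$. Choose $\theta\in[0,\pi/2]$ with $\cos\theta=|a|$ and $\sin\theta=|b|$, and write the phases as $a=e^{i\psi}\cos\theta$, $b=e^{i\delta}\sin\theta$. Then
\[
\mathbf{A}' = \begin{pmatrix} e^{i\psi}\cos\theta & e^{i\delta}\sin\theta \\ -e^{-i\delta}\sin\theta & e^{-i\psi}\cos\theta\end{pmatrix},
\]
and multiplying by $e^{i\phi/2}$ recovers exactly the $2\times 2$ entries of $\mathbf{H}(1,2,\theta,\phi,\psi,\delta)$. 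So $\mathbf{A}=\mathbf{H}(1,2,\theta,\phi,\psi,\delta)$, as required.

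The main obstacle is a cosmetic one: the parameterization is not injective at the boundary values $\theta\in\{0,\pi/2\}$, where one of $\psi$ or $\delta$ becomes irrelevant (a gauge freedom), and the phase $\phi/2$ is only defined modulo $\pi$ from $\det\mathbf{A}$. These ambiguities do not obstruct \emph{existence} of the representation—any consistent choice works—but they should be noted so that the statement is read as existence rather than uniqueness. With that observation the argument is complete.
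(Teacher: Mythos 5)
Your proof is correct. It is worth noting how it relates to the paper's treatment: the paper's ``proof'' of this lemma is a one-line assertion that the generalized $2\times 2$ unitary matrix \emph{is} the displayed form, i.e.\ it simply quotes the standard parameterization of $U(2)$ without deriving it. You actually derive that parameterization: computing $\det \mathbf{H} = e^{i\phi}$ to identify the global-phase parameter, reducing to an $SU(2)$ element $\mathbf{A}' = e^{-i\phi/2}\mathbf{A}$, invoking (and noting how to verify) the canonical form $\begin{pmatrix} a & b \\ -\bar b & \bar a \end{pmatrix}$ with $|a|^2 + |b|^2 = 1$, and then reading off $\theta, \psi, \delta$ from the polar decompositions of $a$ and $b$. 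This buys a self-contained existence argument where the paper relies on the reader accepting the parameterization as known, and your remark that the representation is an existence statement (with gauge freedom at $\theta \in \{0, \pi/2\}$ and the sign ambiguity in $e^{i\phi/2}$, either choice of which still lands in $SU(2)$) is a correct and useful clarification that the paper does not make. Both arguments establish exactly what the induction in the paper's Theorem 1 needs from this lemma.
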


\begin{proof} A generalized 2x2 unitary matrix is written as:
\begin{equation}
\begin{bmatrix} e^{\nicefrac{i\phi}2}e^{i\psi}\cos\theta &  e^{\nicefrac{i\phi}2}e^{i\delta}\sin\theta \\ -e^{\nicefrac{i\phi}2}e^{-i\delta}\sin\theta & e^{\nicefrac{i\phi}2}e^{-i\psi}\cos\theta \end{bmatrix}
\end{equation}
which is exactly ${\bf H}(1,2,\theta,\phi,\psi,\delta)$.\end{proof}

\begin{theorem} \emph{A matrix $\hat{\bf U}$ is unitary if and only if it can be written as a product of ${\bf H}(i,j,\theta,\phi,\psi,\delta)$ matrices with the following form,  where $i,j$ denote the rows and columns with special entries:}
\begin{align}
&{\bf H}(i,j,\theta,\phi,\psi,\delta) = \begin{bmatrix} 1 & \cdots & 0 & \cdots & 0 & \cdots & 0 \\ \vdots & \ddots & \vdots &  &  \vdots & & \vdots \\ 0 & \cdots & e^{\nicefrac{i\phi}2}e^{i\psi}\cos\theta & \cdots & e^{\nicefrac{i\phi}2}e^{i\delta}\sin\theta & \cdots & 0 \\ \vdots &  & \vdots & \ddots  &  \vdots & & \vdots \\ 0 & \cdots & -e^{\nicefrac{i\phi}2}e^{-i\delta}\sin\theta & \cdots & e^{\nicefrac{i\phi}2}e^{-i\psi}\cos\theta & \cdots & 0 \\ \vdots & & \vdots &  &  \vdots & \ddots & \vdots \\ 0 & \cdots & 0 & \cdots & 0 & \cdots & 1\end{bmatrix}
\end{align}
\end{theorem}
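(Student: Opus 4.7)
The proof has two directions. The sufficient direction (products of ${\bf H}$'s are unitary) is immediate: each ${\bf H}(i,j,\theta,\phi,\psi,\delta)$ agrees with the identity outside the $\{i,j\}$ submatrix, and by Lemma 2 that $2\times 2$ submatrix is unitary, so each ${\bf H}$ matrix itself is unitary. Unitarity is preserved under matrix multiplication.

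For the necessary direction (any unitary $\hat{\bf U}$ factors as a product of ${\bf H}$'s), my plan is to induct on $n$, mirroring the usual QR-style reduction. The base case $n=2$ is exactly Lemma 2. For the inductive step on an $n\times n$ unitary $\hat{\bf U}$ with first column $\vec{u}_1$, I apply Lemma 1 to produce a matrix ${\bf A}$, itself a product of ${\bf H}$ matrices (each acting on rows $1$ and some $k\in\{2,\ldots,n\}$), such that ${\bf A}\vec{u}_1 = \|\vec{u}_1\|\vec{e}_1 = \vec{e}_1$, using $\|\vec{u}_1\|=1$ from unitarity. The product ${\bf A}\hat{\bf U}$ is unitary with first column $\vec{e}_1$; orthonormality of the columns forces the first entry of every other column to vanish, and then unit norm of the first row forces the remaining entries of that row to vanish. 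Hence ${\bf A}\hat{\bf U}$ has block-diagonal form $\mathrm{diag}(1,V)$ with $V$ an $(n-1)\times(n-1)$ unitary.

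By the inductive hypothesis, $V$ can be written as a product ${\bf H}'_1\cdots{\bf H}'_k$ of $(n-1)$-dimensional ${\bf H}$ matrices. Each ${\bf H}'(i,j,\cdot)$ embeds into $n$ dimensions as ${\bf H}(i+1,j+1,\cdot)$, which acts trivially on the first row and column, so under these embeddings $\mathrm{diag}(1,V)$ is itself a product of $n$-dimensional ${\bf H}$ matrices. Finally $\hat{\bf U} = {\bf A}^{-1}\,\mathrm{diag}(1,V)$, so it remains to check that ${\bf A}^{-1}$ is a product of ${\bf H}$ matrices. This follows from the observation that ${\bf H}(i,j,\theta,\phi,\psi,\delta)^{-1}$ (its conjugate transpose) is again an ${\bf H}$ matrix, with parameters of the form $(i,j,-\theta,-\phi,-\psi,\delta)$ (verified by matching the $2\times 2$ block entries); reversing the factor order in ${\bf A}$ then gives ${\bf A}^{-1}$ in the desired form.

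The main subtlety I anticipate is the block-decomposition step: carefully using both column- and row-orthonormality of ${\bf A}\hat{\bf U}$ to conclude that both the first column and the first row reduce to $\vec{e}_1$, so that the recursion genuinely descends to a smaller unitary. The embedding and inversion arguments are largely mechanical, but worth stating precisely so that the final product is transparently an ${\bf H}$-factorization of $\hat{\bf U}$ at dimension $n$.
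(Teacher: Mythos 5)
Your proposal is correct and follows essentially the same route as the paper: the easy direction from unitarity of each ${\bf H}$, and the converse by induction on dimension, using Lemma 1 to send the first column to $\vec{e}_1$, using unitarity of ${\bf A}\hat{\bf U}$ to force the block form $\mathrm{diag}(1,{\bf V})$, embedding the ${\bf H}$-factorization of ${\bf V}$, and left-multiplying by ${\bf A}^{-1}$. Your explicit verification that ${\bf H}(i,j,\theta,\phi,\psi,\delta)^{-1} = {\bf H}(i,j,-\theta,-\phi,-\psi,\delta)$ is a nice touch that the paper leaves implicit when it asserts $\hat{\bf U} = {\bf A}^{-1}{\bf H}_k'\cdots{\bf H}_1'$ is a product of ${\bf H}$ matrices.
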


\begin{proof} We will prove both the forward and reverse directions:
\begin{enumerate}
\item \emph{A matrix $\hat{\bf U}$ is unitary if it can be written as a product of ${\bf H}$ matrices.}

Observe that matrix ${\bf H}$ is unitary, since ${\bf H}^\dagger {\bf H} = \mathds{I}$. A product of unitary matrices is itself unitary, hence a matrix $\hat{\bf U}$ that is a product of these ${\bf H}$ matrices is unitary.

\item \emph{If a matrix $\hat{\bf U}$ is unitary, it can be written as a product of ${\bf H}$ matrices.}

We will give a proof by induction. We want to show that any $n\times n$ $\hat{\bf U}$ unitary matrix can be written as product of ${\bf H}$ matrices.\\

{\bf Base Case} When $n=2$, i.e., for a $2 \times 2$ unitary matrix, we know that it can be written as product of ${\bf H}$ matrices from Lemma 2.\\

{\bf Inductive Hypothesis} Assume that the claim holds for $n=k$, i.e., any $k \times k$ unitary matrix can be written as a product of ${\bf H}$ matrices.\\

With $n=k+1$, consider an arbitrary $(k+1)\times (k+1)$ unitary matrix ${\hat {\mathbf{U}}} = \begin{bmatrix} \vec{u}_1 & \vec{u}_2 & \ldots & \vec{u}_{k+1} \end{bmatrix}$ where $\vec{u}_i$ is the $i$th column. Since $\hat{\mathbf{U}}$ is unitary, $\|\vec{u}_i\|= 1$ for $1\leq i \leq k+1$. Then, by Lemma 1, we have a matrix ${\bf A}$ that is a product of ${\bf H}$ matrices such that ${\bf A}\vec{u_1} = \|\vec{u}_1\|\vec{e}_1 = \vec{e}_1$.

Using this matrix, we find that $\hat{\bf U}' = {\bf A}\hat{\bf U} = \begin{bmatrix} 1 & \vec{C} \\ \vec{0} & {\bf V} \end{bmatrix}$ where $\vec{0}$ represents a $k \times 1$ column vector, $\vec{C}$ represents a $1 \times k$ row vector, and ${\bf V}$ represents  a $k\times k$ matrix. But $\hat{\bf U}'$ is unitary, so $\hat{\bf U}'(\hat{\bf U}')^\dagger = \mathds{I}$, which means ${\bf V}{\bf V}^\dagger = \mathds{I}_{k \times k}$ and $\vec{C} = \vec{0}$.\\

{\bf Inductive Step} From the inductive hypothesis, we know that ${\bf V}$ can be written as a product of ${\bf H}$ matrices, so let us write ${\bf V} = {\bf H}_k,\ldots,{\bf H}_1$. Next, we take each of these $k \times k$ ${\bf H}$ matrices and pad them to obtain $(k+1)\times (k+1)$ matrices ${\bf H}_i' = \begin{bmatrix} 1 & 0 \\ 0 & {\bf H}_i\end{bmatrix}$. Then, we see that ${\bf H}_k',\ldots,{\bf H}_1' = \begin{bmatrix} 1 & \vec{0} \\ \vec{0} & {\bf V} \end{bmatrix} = \hat{\bf U}'$.

Finally, we can write our arbitrary unitary matrix $\hat{\bf U} = {\bf A}^{-1}{\bf H}_k',\ldots,{\bf H}_1'$, which is indeed a product of ${\bf H}$ matrices. Hence, we have shown that any unitary matrix can be written as a product of ${\bf H}$ matrices.

\end{enumerate}
\end{proof}

\end{document}